\documentclass{article}
\usepackage{flo_preprint,times}
\usepackage{amsmath,amssymb,amsthm}
\usepackage{booktabs,array,tabularx,multirow,graphicx,xcolor}
\usepackage{enumitem,caption}
\usepackage{hyperref}
\usepackage{url}
\usepackage{float}
\floatstyle{ruled}
\newfloat{algorithm}{tbp}{loa}
\floatname{algorithm}{Algorithm}
\floatstyle{plain}
\usepackage{wrapfig}
\makeatletter 
\newcommand\FloatBarrier{\par\begingroup\let\@elt\relax\edef\@tempa{\@botlist\@deferlist\@dbldeferlist}\ifx\@tempa\@empty\endgroup\else\endgroup\clearpage\fi}
\makeatother
\hypersetup{
  hidelinks,
  pdftitle={More Domains, No Gains? FLO: Fast--Slow Reciprocal Learning for Multi-Domain Graph Pre-training},
  pdfauthor={Yupeng Tong, Lianze Shan, Jitao Zhao, Di Jin, Dongxiao He},
  pdfsubject={Multi-Domain Graph Pre-training}
}
\newtheoremstyle{floplain}{\topsep}{0pt}{\itshape}{}{\bfseries}{.}{ }{}
\newtheoremstyle{flodefinition}{\topsep}{0pt}{\normalfont}{}{\bfseries}{.}{ }{}
\theoremstyle{floplain}
\newtheorem{proposition}{Proposition}[section]
\newtheorem{lemma}[proposition]{Lemma}
\newtheorem{theorem}[proposition]{Theorem}

\theoremstyle{flodefinition}
\newtheorem{definition}[proposition]{Definition}
\newenvironment{restate}[1]{\par\addvspace{\medskipamount}\noindent\textbf{#1 (restated).}\ \itshape\ignorespaces}{\par\addvspace{\medskipamount}}
\newcommand{\NA}{\textemdash}
\title{More Domains, No Gains? FLO: Fast--Slow Reciprocal Learning for Multi-Domain Graph Pre-training}
\author{\textbf{Yupeng Tong}$^{1\dagger}$ \quad \textbf{Lianze Shan}$^{1\dagger}$ \quad \textbf{Jitao Zhao}$^{1}$ \quad \textbf{Di Jin}$^{1}$ \quad \textbf{Dongxiao He}$^{1*}$ \\ $^{1}$School of Computer Science and Technology, Tianjin University, Tianjin, China \\ \texttt{\{cualb, shanlz2119, zjtao, jindi, hedongxiao\}@tju.edu.cn}}
\begin{document}
\maketitle
\begingroup
\renewcommand{\thefootnote}{\fnsymbol{footnote}}
\footnotetext[2]{These authors contributed equally to this work.}
\footnotetext[1]{Corresponding author: Dongxiao He (\href{mailto:hedongxiao@tju.edu.cn}{hedongxiao@tju.edu.cn}).}
\endgroup
\begin{abstract}
Inspired by the great success of foundation models in the development of artificial general intelligence, Graph Foundation Models (GFMs) have recently attracted significant attention and have the potential to advance graph intelligence. Exhibiting such intelligence requires effectively learning knowledge from large-scale, multi-domain graph data during pre-training. Increasing domain diversity during pre-training is expected to yield richer and more generalizable knowledge. Most existing methods pursue this goal by designing domain alignment strategies that help a single graph model learn from multiple domains. However, we observe that existing methods struggle to benefit from increasing domain diversity. Through empirical and theoretical analysis, we reveal two key limitations. Firstly, pre-training in most GFMs is confined to a single message-passing paradigm. Since propagation flattens relations among neighboring nodes, the graph model can hardly perceive the knowledge beyond propagation, namely the attribute-predictable component of these relations. Secondly, domain alignment strategies often rely on randomly initialized parameters that are jointly updated with the graph model using immediate loss feedback, leaving early knowledge acquisition vulnerable to cross-domain interference. To address these issues, we propose FLO, a Fast--sLow reciprOcal approach for multi-domain graph pre-training. Specifically, a propagation-free attribute branch predicts the graph model's relations without message passing and returns these predictions to refine the model. We further stabilize knowledge acquisition by pairing a fast process that adapts to the current domain with a slow process that integrates knowledge across domains. On 12 datasets, FLO benefits from increasing domain diversity and achieves the best average performance in cross-domain and cross-task transfer.
\end{abstract}

\section{Introduction}
\label{sec:introduction}
In recent years, foundation models \citep{bommasani2021opportunities} have achieved great success in advancing the development of artificial general intelligence, fundamentally reshaping various fields of machine learning. Inspired by this progress, Graph Foundation Models (GFMs) have emerged as a research frontier, attracting significant attention for their potential to advance graph intelligence \citep{mao2024position,liu2025graph,li2024zerog,he2025unigraph,zhu2024graphcontrol,zhao2025fully,sun2025riemanngfm,xia2026anygraph,sun2026multi}. By modeling complex relational structures inherent in real-world systems, GFMs hold the promise of establishing a unified paradigm for graph machine learning \citep{liu2024one,pmlr-v267-wang25eq,bevilacqua2025holographic}. However, exhibiting such highly generalizable intelligence imposes a critical challenge: these models must be capable of effectively extracting and learning universal knowledge from large-scale, multi-domain graph data during pre-training \citep{hu2020strategies,qiu2020gcc,you2020graph}.

Increasing domain diversity during pre-training is expected to yield richer and more generalizable knowledge. To achieve this, most existing methods\footnote{More related work is discussed in Appendix~\ref{app:relatedwork}.} focus on carefully designing domain alignment strategies to help a single graph model reconcile the heterogeneity across diverse domains. Typically, these methods rely on unifying feature spaces or introducing learnable alignment tokens to bridge semantic and structural gaps across domains. For the former, these approaches project heterogeneous node attributes into a shared, domain-agnostic space \citep{shan2026leda,eliasof2026bridging,pmlr-v267-shen25p,sun2025handling,shan2026mug}. For instance, FUG \citep{zhao2024fug} designs a dimension encoder to capture transferable distributions among attributes, whereas TIG \citep{zhao2026towards} proposes transfer-invariant metadata, namely feature-dimension relations shared across graphs, and mines it to guide the pre-training process. For the latter, these methods explicitly reconcile complex cross-domain differences by augmenting the Graph Neural Network (GNN) encoder with auxiliary learnable parameters \citep{zhao2024all,yuan2025much,zhuo2026modality}. For example, MDGPT \citep{yu2024text} employs learnable domain tokens as semantic anchors across various datasets, while SAMGPT \citep{yu2025samgpt} introduces structure tokens to align distinct topological characteristics. With these strategies, a graph model is expected to gain more knowledge as more source domains are added.

\begin{wrapfigure}[14]{r}{0.53\linewidth}
\centering
\includegraphics[width=\linewidth]{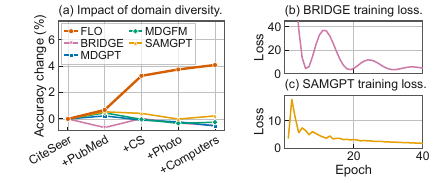}
\caption{Motivation experiments with Cora as the target graph. (a) Accuracy change as source domains are added. (b)--(c) Pre-training loss of two baselines.}
\label{fig:motivation}
\end{wrapfigure}
However, we observe a counter-intuitive phenomenon: as shown in Figure~\ref{fig:motivation}(a), existing methods struggle to benefit from increasing domain diversity. This suggests that domain alignment is a prerequisite rather than a guarantee of knowledge acquisition. Through detailed empirical and theoretical analysis, we identify two limitations that remain after domain alignment. \textbf{Firstly,} after domain alignment, the graph model still forms node relations mainly through message passing \citep{wang2026beyond,yang2026graph}. Propagation smooths the representations of neighboring nodes \citep{li2018deeper,cai2020note} and flattens their relations. The model can thus hardly perceive the knowledge beyond propagation, namely the attribute-predictable component of these relations. \textbf{Secondly,} the aforementioned alignment components (e.g., domain or structure tokens) often rely on randomly initialized parameters that are jointly updated with the graph model using immediate loss feedback. Given the severe data heterogeneity, this immediate joint optimization paradigm leaves early knowledge acquisition vulnerable to cross-domain interference, ultimately destabilizing the pre-training process (Figure~\ref{fig:motivation}(b)--(c)).

To address the above limitations, we propose FLO, a multi-domain graph pre-training approach with Fast--sLow reciprOcal learning. Specifically, we predict the relations of the graph model without message passing and return the predictions as feedback, so that the graph model acquires knowledge beyond propagation. We further pair a fast process that adapts to the current domain with a slow process that integrates knowledge across domains \citep{mcclelland1995there,polyak1992acceleration,zhang2019lookahead,cha2021swad,arani2022learning,busbridge2023scale,morales-brotons2024exponential}, and take the exchanged relations from the slow process. This protects early knowledge acquisition from cross-domain interference. After pre-training, we deploy only the graph model on unseen graphs, so FLO adds no parameters at inference. As shown in Figure~\ref{fig:motivation}(a), the accuracy of FLO increases steadily as source domains are added. Our contributions are summarized as follows:
\begin{itemize}[leftmargin=1.3em,itemsep=1pt,topsep=3pt]
\item We find that existing methods struggle to benefit from increasing domain diversity. We reveal two limitations behind this: message passing alone can hardly perceive knowledge beyond propagation, and randomly initialized alignment parameters updated by immediate loss feedback leave early knowledge acquisition vulnerable to cross-domain interference.
\item We propose FLO, which refines the node relations induced by message passing with feedback grounded in node attributes alone and takes this feedback from a slow process that integrates knowledge across domains. We prove that the returned relations lead the graph model toward the knowledge beyond propagation without domain-specific parameters, while the slow process filters cross-domain interference from the feedback.
\item Experiments on 12 datasets show that FLO achieves the best average performance in cross-domain and cross-task transfer and benefits from increasing domain diversity.
\end{itemize}

\section{Preliminaries}
\label{sec:preliminaries}
Unless otherwise specified, we use calligraphic letters for sets, bold uppercase letters for matrices, bold lowercase letters for vectors, and blackboard bold letters for spaces.

\textbf{Graph data.} A graph $G=(\mathcal V,\mathcal T)$ has node set $\mathcal V$ with $n$ nodes, edge set $\mathcal T$, adjacency matrix $\mathbf A$, and feature matrix $\mathbf X\in\mathbb R^{n\times d}$ with rows $\mathbf x_i$. In this paper, we consider undirected graphs. Let $\widetilde{\mathbf A}=\mathbf A+\mathbf I$, $\widetilde{\mathbf D}_{ii}=\sum_j\widetilde{\mathbf A}_{ij}$, and $\widehat{\mathbf A}=\widetilde{\mathbf D}^{-1/2}\widetilde{\mathbf A}\widetilde{\mathbf D}^{-1/2}$.

\textbf{Graph neural networks.} A Graph Convolutional Network (GCN) layer \citep{kipf2017semisupervised} computes:
\begin{equation}
\mathbf H^{(\ell+1)}=\sigma\bigl(\widehat{\mathbf A}\mathbf H^{(\ell)}\mathbf W^{(\ell)}\bigr),
\label{eq:gcn}
\end{equation}
where $\mathbf W^{(\ell)}$ is a learnable weight matrix and $\sigma$ is a nonlinear activation. It aggregates the representations of neighboring nodes through message passing, and this propagation smooths them \citep{li2018deeper}. The Dirichlet energy $\mathcal E(\mathbf Y)=\operatorname{tr}(\mathbf Y^\top(\mathbf I-\widehat{\mathbf A})\mathbf Y)$ measures how much a signal $\mathbf Y$ varies across edges, and propagation lowers it (Section~\ref{flo:sec:theory}).

\textbf{Multi-domain pre-training.} We are given $S$ unlabeled source graphs $\{G_s\}_{s=1}^{S}$ with adjacency matrices $\mathbf A_s$ and raw feature matrices $\mathbf X_s^{\rm raw}$, whose feature dimensions and feature semantics differ across graphs. We treat each graph dataset as a domain. The goal is to pre-train a graph encoder on the source graphs without any labels, such that it produces node representations for a target graph from an unseen domain without being updated on it. To align the feature dimensions of different graphs, we map each raw feature matrix to a common dimension $d$:
\begin{equation}
\mathbf X_s=\operatorname{RowNorm}\!\left(\operatorname{Pad}_d(\widetilde{\mathbf X}_s\mathbf V_s)\right),\qquad
\widetilde{\mathbf X}_s=\operatorname{RowNorm}(\mathbf X_s^{\rm raw}),
\label{eq:frontend}
\end{equation}
where $\mathbf V_s$ contains up to $d$ leading right singular vectors of $\widetilde{\mathbf X}_s$, $\operatorname{Pad}_d$ appends zeros when needed, and $\operatorname{RowNorm}$ normalizes nonzero rows to unit length. The target graph undergoes the same preprocessing, which uses no labels.

\section{Method}
\label{flo:sec:method}
\label{sec:method}
As illustrated in Figure~\ref{fig:framework}, FLO addresses two problems that remain after domain alignment: what relational knowledge a shared graph encoder acquires (Section~\ref{flo:sec:parr}), and how this knowledge stays stable as source domains alternate (Section~\ref{flo:sec:fska}). We then present the training objective and transfer procedure in Section~\ref{flo:sec:objective}, followed by the theoretical analysis in Section~\ref{flo:sec:theory}.

\subsection{Propagation--Attribute Relation Refinement}
\label{flo:sec:parr}
Section~\ref{flo:sec:theory} shows how propagation can flatten relations among nearby nodes. To address this limitation, we design Propagation--Attribute Relation Refinement (PARR), which pairs the graph encoder with a propagation-free attribute branch. The key is the information asymmetry between the two branches rather than the second branch itself: the attribute branch is deliberately denied access to propagation. This constraint makes its optimal relation prediction the attribute-predictable component $\overline P$ (Definition~\ref{flo:def:beyond}), turning reciprocal relation learning into a conditional prediction-and-refinement process rather than ordinary agreement between peer encoders \citep{zhang2018deep,feng2022freekd}.

Specifically, the graph branch uses a two-layer GCN as the graph encoder $f_G$. The attribute branch $f_M$ uses the identity matrix in place of $\widehat{\mathbf A}$ and is implemented as a two-layer MLP \citep{zhang2022graphless,tian2023learning,zhou2025tined} with the same layer widths. The two branches have separate parameters $\theta_G$ and $\theta_M$, each shared across source graphs. For source graph $s$, we sample two independent elementwise masks $\mathbf M_G,\mathbf M_M\in\{0,1\}^{n_s\times d}$ with keep probability $p$ and compute:
\begin{equation}
\mathbf Z_G=f_G(\mathbf X_s\odot\mathbf M_G,\mathbf A_s;\theta_G),
\qquad
\mathbf Z_M=f_M(\mathbf X_s\odot\mathbf M_M;\theta_M).
\label{flo:eq:encoding}
\end{equation}
Inspired by masked graph autoencoders \citep{hou2022graphmae}, each branch $a\in\{G,M\}$ recovers the direction of the unmasked features through a linear decoder $D_a$:
\begin{equation}
\mathcal L_{\mathrm{rec}}^a
=\frac{\gamma}{n_s}\sum_{i=1}^{n_s}
\left\|\operatorname{RowNorm}\!\left(D_a(\mathbf z_i^a)\right)-\mathbf x_i\right\|_2^2,
\label{flo:eq:recon}
\end{equation}
where $\mathbf z_i^a$ and $\mathbf x_i$ are the $i$-th rows of $\mathbf Z_a$ and $\mathbf X_s$, and $\gamma>0$ is a scaling coefficient. Since every nonzero row of $\mathbf X_s$ has unit norm by Eq.~\eqref{eq:frontend}, the decoder predicts the direction of each node feature. Reconstruction keeps each representation tied to node content while the branch learns relations.

The two branches need not share embedding coordinates, so they exchange relation distributions instead of embeddings. At each step, we sample a set $\mathcal I$ of anchor nodes and, for every anchor $i$, a candidate set $\mathcal C_i$ shared by both branches and their slow copies (Section~\ref{flo:sec:fska}). The candidates are the anchor itself, the nodes visited by short random walks from $i$, and nodes sampled uniformly from the graph (Appendix~\ref{app:implementation}). Let $\mathbf Z$ be node representations with rows $\mathbf z_i$. Following knowledge distillation \citep{hinton2015distilling,park2019relational}, in particular methods that match distributions of pairwise similarities \citep{yang2020distilling,fang2021seed,zheng2021ressl}, we define the relation distribution of anchor $i$ over its candidates as:
\begin{equation}
p_{\mathbf Z}(c\mid i)=
\frac{\exp\!\left(\cos(\mathbf z_i,\mathbf z_{c})/\tau\right)}
{\sum_{c'\in\mathcal C_i}\exp\!\left(\cos(\mathbf z_i,\mathbf z_{c'})/\tau\right)},
\qquad c\in\mathcal C_i,
\label{flo:eq:relations}
\end{equation}
where $\tau>0$ is a temperature. The relation distribution describes how similar the anchor is to each of its candidates. Relation distributions are unchanged by orthogonal transformations of the representations and positive rescaling of their rows (Appendix~\ref{flo:app:invariance}).

\begin{figure}[t]
\centering
\includegraphics[width=\linewidth]{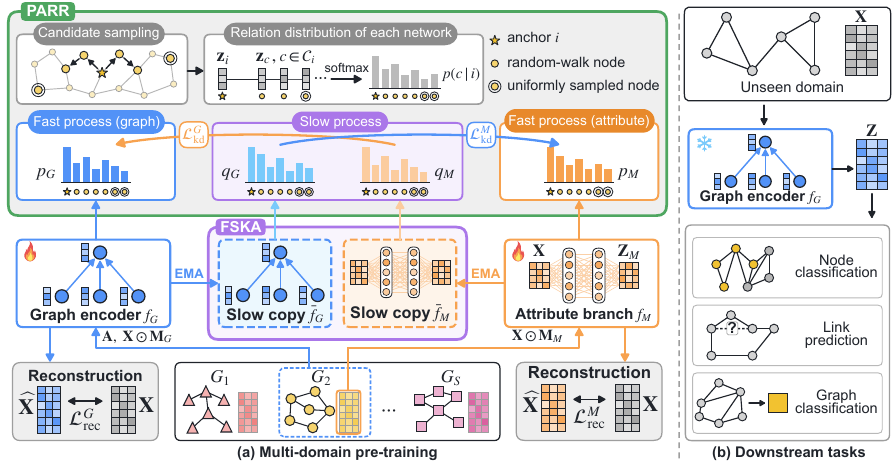}
\caption{Overview of FLO. (a) Multi-domain pre-training: source graphs are visited in turn, and the graph branch $f_G$ and the attribute branch $f_M$ each reconstruct node features. In Fast--Slow Knowledge Acquisition (FSKA), each branch keeps a slow copy updated by an Exponential Moving Average (EMA) after each step. In Propagation--Attribute Relation Refinement (PARR), each branch learns the relation distribution of the other branch's slow copy over the same candidates. (b)~Downstream tasks: only the frozen graph encoder $f_G$ is applied to graphs from unseen domains.}
\label{fig:framework}
\end{figure}
Let $p_G$ and $p_M$ denote the relation distributions of $\mathbf Z_G$ and $\mathbf Z_M$, and let $q_G$ and $q_M$ denote the relation targets supplied by FSKA (Section~\ref{flo:sec:fska}). The two directions of reciprocal relation learning are then:
\begin{equation}
\mathcal L_{\mathrm{kd}}^M=\frac{1}{|\mathcal I|}\sum_{i\in\mathcal I}
\operatorname{KL}\bigl(q_G(\cdot\mid i)\,\|\,p_M(\cdot\mid i)\bigr),
\qquad
\mathcal L_{\mathrm{kd}}^G=\frac{1}{|\mathcal I|}\sum_{i\in\mathcal I}
\operatorname{KL}\bigl(q_M(\cdot\mid i)\,\|\,p_G(\cdot\mid i)\bigr),
\label{flo:eq:bidirectional}
\end{equation}
where $\operatorname{KL}$ denotes the Kullback--Leibler divergence. Through $\mathcal L_{\mathrm{kd}}^M$, the attribute branch learns the relations formed by propagation from the attributes of the anchor and its candidates alone. Through $\mathcal L_{\mathrm{kd}}^G$, its predictions are returned to the graph encoder as relation targets. This exchange corrects the relations of the graph encoder toward those predictable from node attributes, with an error controlled by how well the attribute branch fits the relations of the graph branch (Section~\ref{flo:sec:theory}). Relation learning alone admits the trivial solution of uniform relations, which the reconstruction in Eq.~\eqref{flo:eq:recon} rules out by keeping each representation predictive of its own node features (Appendix~\ref{flo:app:recon}).

\subsection{Fast--Slow Knowledge Acquisition}
\label{flo:sec:fska}
As source domains alternate, the pre-training loss of existing methods fluctuates sharply (Figure~\ref{fig:motivation}(b)--(c)). The parameters of each branch in PARR are also updated on one source graph at a time. If these parameters also supplied the relation targets, changes induced by each source graph would be passed to the other branch through these targets. To address this issue, we introduce Fast--Slow Knowledge Acquisition (FSKA), which separates the parameters that adapt to the current source graph from those that supply the relation targets.

Inspired by momentum teachers \citep{tarvainen2017mean,he2020momentum,grill2020bootstrap}, each branch $a\in\{G,M\}$ maintains a slow copy $\overline\theta_a$ of its online parameters $\theta_a$, initialized as $\overline\theta_a=\theta_a$. The online parameters are optimized on the current source graph and form the fast process, which acquires knowledge from the current domain. The slow copy receives no gradients and forms the slow process, which integrates this knowledge across domains. After each step, it is updated by an exponential moving average:
\begin{equation}
\overline\theta_a\leftarrow\mu\,\overline\theta_a+(1-\mu)\,\theta_a,\qquad a\in\{G,M\},
\label{flo:eq:ema-update}
\end{equation}
where $\mu\in(0,1)$ is the momentum coefficient, so each online update enters the relation targets only gradually. Each slow copy uses the same masked input and candidate sets as its online branch, and supplies the relation target $q_a$ in Eq.~\eqref{flo:eq:bidirectional}. Combined with PARR, each fast branch learns from the slow copy of the other branch. PARR determines the relational target being exchanged, whereas FSKA controls the time scale at which this target enters reciprocal supervision.

\subsection{Training Objective and Transfer}
\label{flo:sec:objective}
The overall objective combines reconstruction with the two directions of relation learning:
\begin{equation}
\mathcal L=\mathcal L_{\mathrm{rec}}^G+\mathcal L_{\mathrm{rec}}^M+
\lambda\,r_e\,\bigl(\mathcal L_{\mathrm{kd}}^G+\mathcal L_{\mathrm{kd}}^M\bigr),
\qquad r_e=\min(1,e/T_w),
\label{flo:eq:objective}
\end{equation}
where $\lambda>0$ weights relation learning and $r_e$ increases its weight linearly over the first $T_w$ epochs. This warm-up lets reconstruction shape feature-dependent representations before relation learning reaches its full weight. Since the relation targets receive no gradients, each online branch $a$ is optimized only through $\mathcal L_{\mathrm{rec}}^a+\lambda r_e\mathcal L_{\mathrm{kd}}^a$. In each epoch, we visit the source graphs in turn. At each visit, we take one optimization step on both online branches and their decoders, then update the slow copies. Each epoch thus takes $S$ steps, one per source graph.

After pre-training, we discard the attribute branch, the decoders, and the slow copies. For an unseen target graph, we apply the frozen graph encoder $f_G$ to its preprocessed features and adjacency matrix, without input masks or domain-specific encoder parameters. The complete algorithm, hyperparameters, and computational cost are given in Appendix~\ref{app:implementation}.

\subsection{Theoretical Analysis}
\label{flo:sec:theory}
\label{sec:theory}
In this subsection, we analyze PARR and FSKA theoretically, measuring what a branch has learned by the information in its relation distributions. Proofs are given in Appendix~\ref{flo:app:theory}.

\textbf{Propagation erodes relational information.}
\label{flo:sec:mp-theory}
We draw a source graph $s$ with probability $\pi_s$, an anchor with its candidate slots, and the masked inputs of both branches. Let $P$ be the relation distribution of the graph branch (Eq.~\eqref{flo:eq:relations}), and let $C$ be a slot drawn from $P$. Let $\xi$ contain the features of the anchor and its candidates in both masked inputs.

\begin{definition}[Knowledge beyond propagation]
\label{flo:def:beyond}
Let $\overline P=\mathbb E[P\mid\xi]$ denote the component of the graph-branch relation distribution predictable from propagation-free attribute observations, for a fixed candidate set. We call $\overline P$ the knowledge beyond propagation.
\end{definition}
The information in the relations of the graph branch then splits as:
\begin{equation}
I(C;P)=I(C;\xi)+R,\qquad R=I(C;P\mid\xi)=\mathbb E\,\operatorname{KL}(P\|\overline P).
\label{flo:eq:decomp}
\end{equation}
The first term is carried by the knowledge beyond propagation, and $R$ is the part induced by propagation. Let $\mathcal E$ be the Dirichlet energy on graph $s$ (Section~\ref{sec:preliminaries}), and let $\zeta_s<1$ be the largest magnitude among the eigenvalues of $\widehat{\mathbf A}_s$ other than one.

\begin{theorem}[Propagation flattens relations]
\label{flo:thm:prop}
Let $\mathbf Z=\widehat{\mathbf A}_s^{K}\mathbf Y$, and let $P_i^h$ be the relations of anchor $i$ over its candidates within $h$ hops. Then the following holds:
\begin{equation}
\operatorname{KL}(P_i^h\|U)\le \beta_i\,\mathcal E(\mathbf Z)\le \beta_i\,\zeta_s^{2K}\,\mathcal E(\mathbf Y),
\label{flo:eq:flatten}
\end{equation}
where $U$ is uniform and $\beta_i$ depends only on $h$, $\tau$, the degree of $i$, and $\|\mathbf z_i\|_2$.
\end{theorem}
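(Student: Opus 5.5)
We split the claim into two inequalities: a relation-to-energy bound $\operatorname{KL}(P_i^h\|U)\le\beta_i\,\mathcal E(\mathbf Z)$, and a spectral decay bound $\mathcal E(\widehat{\mathbf A}_s^K\mathbf Y)\le\zeta_s^{2K}\mathcal E(\mathbf Y)$. The second is the routine half. Diagonalize $\widehat{\mathbf A}_s=\sum_k\lambda_k\mathbf u_k\mathbf u_k^\top$ with $\lambda_1=1$ and $\mathbf u_1\propto\widetilde{\mathbf D}^{1/2}\mathbf 1$; on a disconnected graph we take all eigenvalue-one eigenvectors together. Writing $\mathbf Y=\sum_k\mathbf u_k\mathbf c_k^\top$ gives
\[
\mathcal E(\widehat{\mathbf A}_s^K\mathbf Y)=\sum_k(1-\lambda_k)\lambda_k^{2K}\|\mathbf c_k\|_2^2 .
\]
The eigenvalue-one terms vanish because of the factor $1-\lambda_k$, and every other term is multiplied by $\lambda_k^{2K}\le\zeta_s^{2K}$. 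Since $1-\lambda_k\ge 0$, we get $\mathcal E(\mathbf Z)\le\zeta_s^{2K}\mathcal E(\mathbf Y)$.

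For the first inequality we bound the KL to uniform by the spread of the logits, then the spread by local differences in $\mathbf Z$, then those differences by the Dirichlet energy.

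\textbf{Step 1 (KL to uniform).} Set $s_c=\cos(\mathbf z_i,\mathbf z_c)/\tau$ for $c\in\mathcal C_i^h$, the candidates within $h$ hops. For a softmax $P$ with logits $s_c$ we use
\[
\operatorname{KL}(P\|U)\le\tfrac12\max_c(s_c-\bar s)^2\le\tfrac12\max_{c}(s_c-s_i)^2 ,
\]
where $\bar s$ is any centering value; we take $\bar s=s_i=1/\tau$. The constant comes either from the log-sum-exp form $\operatorname{KL}=\sum_cP_cs_c-\log\sum_c e^{s_c}+\log|\mathcal C|$ with a second-order (Hoeffding-type) bound, or from the cruder $\operatorname{KL}\le\max_c s_c-\min_c s_c$. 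In the latter case we use
\[
|s_c-s_i|\le\frac{1-\cos}{\tau}\le\frac{\|\hat{\mathbf z}_i-\hat{\mathbf z}_c\|_2^2}{2\tau},
\]
where hats denote unit vectors, so the bound is quadratic directly.

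\textbf{Step 2 (cosine gap to raw differences).} We need $\|\hat{\mathbf z}_i-\hat{\mathbf z}_c\|_2\le 2\|\mathbf z_i-\mathbf z_c\|_2/\|\mathbf z_i\|_2$. This is where the dependence of $\beta_i$ on $\|\mathbf z_i\|_2$ enters.

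\textbf{Step 3 (hop chain and energy).} For $c$ within $h$ hops, take a path $i=v_0,\dots,v_m=c$ with $m\le h$. By Cauchy--Schwarz,
\[
\|\mathbf z_i-\mathbf z_c\|_2^2\le h\sum_t\|\mathbf z_{v_t}-\mathbf z_{v_{t+1}}\|_2^2 .
\]
Each edge term must then be related to $\mathcal E(\mathbf Z)=\tfrac12\sum_{(u,v)}\bigl\|\mathbf z_u/\sqrt{\tilde d_u}-\mathbf z_v/\sqrt{\tilde d_v}\bigr\|_2^2$.

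This degree normalization is the main obstacle. The energy controls differences of degree-rescaled rows, not raw rows, and the intermediate path nodes have degrees we are not allowed to put into $\beta_i$. The first remedy to try is to apply Steps 2--3 to the rescaled rows $\mathbf z_u/\sqrt{\tilde d_u}$ from the start. Cosine similarity is invariant to positive row rescaling (Appendix~\ref{flo:app:invariance}), so the logits are unchanged. With this choice, $\|\hat{\mathbf z}_i-\hat{\mathbf z}_c\|_2\le 2\|\mathbf w_i-\mathbf w_c\|_2/\|\mathbf w_i\|_2$ for $\mathbf w_u=\mathbf z_u/\sqrt{\tilde d_u}$. Every path edge difference is then directly an energy term, and $\|\mathbf w_i\|_2=\|\mathbf z_i\|_2/\sqrt{\tilde d_i}$ involves only the degree of $i$.

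Combining Steps 1--3 gives $\operatorname{KL}(P_i^h\|U)\le\beta_i\mathcal E(\mathbf Z)$ with $\beta_i$ of the form $c\,h\,\tilde d_i/(\tau\|\mathbf z_i\|_2^2)$, possibly squared depending on which bound is used in Step 1. Degenerate cases such as $\mathbf z_i=0$ are excluded by convention or by setting $\beta_i=\infty$.
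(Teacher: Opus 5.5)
Your proof is correct and follows the paper's argument essentially step for step. You pass to the degree-rescaled rows $\mathbf z_u/\sqrt{\widetilde d_u}$, which is legitimate by the scale invariance of cosine. You take a shortest path, whose edges are distinct, and apply Cauchy--Schwarz against the edge form of $\mathcal E$. You then use the normalization fact with $\|\mathbf z_i\|_2/\sqrt{\widetilde d_i}$ in the denominator. Finally you use the logit-spread bound $\operatorname{KL}\le\max_c s_c-\min_c s_c=\max_c\|\hat{\mathbf z}_i-\hat{\mathbf z}_c\|_2^2/(2\tau)$, which holds because the anchor slot attains the top logit $1/\tau$. Together these give exactly the paper's $\beta_i=2h\widetilde d_i/(\tau\|\mathbf z_i\|_2^2)$, and your spectral argument for the second inequality is the same as the paper's.

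Commit to that crude bound rather than the Hoeffding-type alternative. The alternative gives $\tfrac12(\beta_i\mathcal E(\mathbf Z))^2$, which squares the energy rather than $\beta_i$ and so does not directly give the stated linear form.
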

Since $I(C;P_i^h)\le\mathbb E\,\operatorname{KL}(P_i^h\|U)$, the relational information among nearby candidates is capped by the Dirichlet energy. By Eq.~\eqref{flo:eq:flatten}, this cap shrinks at a rate set by the spectrum of each graph, so aligning the inputs of different graphs does not change this rate.

\textbf{Reciprocal relation learning removes the propagation-induced part.}
\label{flo:sec:attribute-theory}
Since the attribute branch performs no propagation, its relation distribution $Q$ is a function of $\xi$ and contains no information induced by propagation.

\begin{theorem}[Reciprocal relation learning]
\label{flo:thm:recip}
(i) For every attribute branch $Q$, $\;\mathbb E\,\operatorname{KL}(P\|Q)=R+\mathbb E\,\operatorname{KL}(\overline P\|Q)\ge R$, with equality if and only if $Q=\overline P$.
(ii) If the relations of the graph branch move toward a returned target $Q$, that is, $P'=(1-\eta)P+\eta Q$ with $\eta\in[0,1]$, then $R'\le(1-\eta)R$. If $Q=\overline P$, then in addition $I(C';\xi)=I(C;\xi)$.
(iii) $\mathbb E\|P-\overline P\|_1^2\le2R$, and under the conditions of Appendix~\ref{flo:app:recip}, $R\le\kappa\sum_s\pi_s\sqrt{\mathcal E(\mathbf X_s)}$.
\end{theorem}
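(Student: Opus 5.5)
Part (i) is a Pythagorean identity for KL divergence under conditional expectation. Fix $\xi$ and the candidate set, and write, pointwise in $c$,
\[
\log\frac{P(c)}{Q(c)}=\log\frac{P(c)}{\overline P(c)}+\log\frac{\overline P(c)}{Q(c)}.
\]
Take $\mathbb E[\sum_c P(c)\,\cdot\mid\xi]$ of both sides. Since $Q$ and $\overline P$ are $\xi$-measurable, the second term becomes $\sum_c \mathbb E[P(c)\mid\xi]\log(\overline P(c)/Q(c))=\operatorname{KL}(\overline P\|Q)$. Averaging over $\xi$ then gives $\mathbb E\operatorname{KL}(P\|Q)=\mathbb E\operatorname{KL}(P\|\overline P)+\mathbb E\operatorname{KL}(\overline P\|Q)$. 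The first term is $R$ by Eq.~\eqref{flo:eq:decomp}. Nonnegativity of KL, together with its vanishing only at equality (almost surely), yields the bound and the equality case.

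Part (ii) has two claims.
\begin{itemize}
\item The first uses joint convexity of KL. We have $\overline{P'}=\mathbb E[P'\mid\xi]=(1-\eta)\overline P+\eta\,\mathbb E[Q\mid\xi]=(1-\eta)\overline P+\eta Q$, because $Q$ is $\xi$-measurable. Hence
\[
\operatorname{KL}(P'\|\overline{P'})=\operatorname{KL}\bigl((1-\eta)P+\eta Q\,\big\|\,(1-\eta)\overline P+\eta Q\bigr)\le(1-\eta)\operatorname{KL}(P\|\overline P)+\eta\operatorname{KL}(Q\|Q),
\]
and the last term is zero. Taking expectations gives $R'\le(1-\eta)R$.
\item For the second claim, take $Q=\overline P$. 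Then $\overline{P'}=\overline P$. Since $C'$ is drawn from $P'$, the conditional law of $C'$ given $\xi$ is $\overline{P'}=\overline P$, which is exactly the conditional law of $C$ given $\xi$. The marginal of $C$ is unchanged as well, so $I(C';\xi)=I(C;\xi)$.
\end{itemize}
For this to hold, $I(C;\xi)$ must be read with $C$ indexing slots of the fixed candidate set, and the slot labeling must be shared between $C$ and $C'$. I would state this explicitly.

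Part (iii), first inequality, is Pinsker conditional on $\xi$: $\|P-\overline P\|_1^2\le 2\operatorname{KL}(P\|\overline P)$, and taking expectations gives $\mathbb E\|P-\overline P\|_1^2\le 2R$.

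The second bound in (iii) is the main obstacle, because it depends on conditions deferred to the appendix. My plan is to bound $R$ by a deviation from an attribute-only comparison point. By part (i), $R\le\mathbb E\operatorname{KL}(P\|Q_0)$ for any $\xi$-measurable $Q_0$. I would take $Q_0$ to be the relation distribution computed from the unpropagated encoder applied to $\mathbf X_s$, i.e.\ with $\widehat{\mathbf A}_s$ replaced by $\mathbf I$. The steps are then:
\begin{itemize}
\item The logits $\cos(\mathbf z_i,\mathbf z_c)/\tau$ are Lipschitz in the representations, provided row norms are bounded below. This is where the appendix assumptions (bounded weights, Lipschitz activation, norm lower bound) enter.
\item The KL divergence between two softmaxes is controlled by the sup-difference of their logits: it is at most $2\max_c|\Delta\ell_c|$, or quadratic in that difference.
\item The gap between propagated and unpropagated features is $\|(\mathbf I-\widehat{\mathbf A}_s)\mathbf X_s\|_F$. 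Since $\mathbf I-\widehat{\mathbf A}_s$ is positive semidefinite with spectrum in $[0,2]$, this is at most $\sqrt{2}\,\|(\mathbf I-\widehat{\mathbf A}_s)^{1/2}\mathbf X_s\|_F=\sqrt{2\,\mathcal E(\mathbf X_s)}$. For two layers, the gap is propagated through the weights layer by layer.
\end{itemize}
Combining these gives $\mathbb E[R\mid s]\le\kappa\sqrt{\mathcal E(\mathbf X_s)}$, where $\kappa$ collects $\tau^{-1}$, the weight norms, the Lipschitz constants and the inverse norm bounds. Averaging over $s$ with weights $\pi_s$ finishes the proof. The delicate point is the passage from the Frobenius norm to the per-anchor sup-norm of logit differences, averaged over anchors. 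I would handle it by Cauchy--Schwarz over the anchor distribution, which is what produces a square root rather than a linear dependence on the energy.
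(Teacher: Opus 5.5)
Your proposal is correct and follows the paper's proof step for step: the same tower-property identity for (i), joint convexity with $\overline{P'}=(1-\eta)\overline P+\eta Q$ for (ii), and Pinsker for the first half of (iii). For the second half you take the same route, namely Part (i) with a propagation-free comparator, a Lipschitz bound on cosine-softmax relations under a row-norm floor $c$, and $\|(\mathbf I-\widehat{\mathbf A}_s)\mathbf X_s\|_F\le\sqrt{2\,\mathcal E(\mathbf X_s)}$.

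The paper's conditions are a single linear layer $\widehat{\mathbf A}_s\mathbf X_s\mathbf W$, with $\xi$ assumed to determine the rows of $\mathbf X_s\mathbf W$; this assumption is what makes your $Q_0$ $\xi$-measurable despite the masks. Under these conditions your ``delicate point'' does not arise: every row change is at most $\|\mathbf W\|_{\mathrm{op}}\|(\mathbf I-\widehat{\mathbf A}_s)\mathbf X_s\|_F$ uniformly, so the square root comes from the linear KL--logit bound itself, not from Cauchy--Schwarz over anchors. A two-layer version would also need bias-free, positively homogeneous activations, not merely Lipschitz ones, to keep the hidden-layer energy controlled by $\mathcal E(\mathbf X_s)$.
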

Part (i) shows that learning from the graph branch drives the attribute branch toward the knowledge beyond propagation. Part (ii) shows that moving the relations a fraction $\eta$ toward an attribute target removes at least a fraction $\eta$ of $R$, and a target equal to $\overline P$ also keeps the information predicted by attributes. Part (iii) relates the size of this correction to the Dirichlet energy of each graph, and FLO makes it with parameters shared by all graphs. On our six graphs, the propagation-induced relation change correlates with the drop without relation learning ($r=0.89$, Appendix~\ref{app:relation-change}).

\textbf{Slowly updated targets filter cross-domain interference.}
\label{flo:sec:ema-theory}
FLO visits the $S$ source graphs in turn and takes one update on each. Over a window of training, we write each update as $\delta_t=\mathbf g+\mathbf e_{s(t)}$, where $\mathbf g$ is shared by all graphs and $\sum_s\mathbf e_s=\mathbf 0$. The online parameters then follow $\mathbf g$ and oscillate with period $S$. We model the cross-domain interference of Section~\ref{sec:introduction} as this oscillation. The large early loss fluctuations in Figure~\ref{fig:motivation}(b)--(c) motivate this model.

\begin{proposition}[Slowly updated targets as a low-pass filter]
\label{flo:prop:slow}
The slowly updated parameters $\overline\theta_t=\mu\overline\theta_{t-1}+(1-\mu)\theta_t$ follow $\mathbf g$ with a lag of $\mu/(1-\mu)$ steps. In the steady state, their period-$S$ oscillation is at most $\rho$ times that of the online parameters, where $\rho$ is given by:
\begin{equation}
\rho=\frac{1-\mu}{\sqrt{(1-\mu)^2+4\mu\sin^2(\pi/S)}}.
\label{flo:eq:rho}
\end{equation}
The relation targets inherit this attenuation up to a Lipschitz constant.
\end{proposition}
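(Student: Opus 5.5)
The argument treats the EMA as a linear time-invariant filter acting on the online trajectory, using the update model $\delta_t=\mathbf g+\mathbf e_{s(t)}$ with $\sum_s\mathbf e_s=\mathbf 0$. First, unroll the online parameters as $\theta_t=\theta_0+t\mathbf g+\mathbf c_t$, where $\mathbf c_t=\sum_{k\le t}\mathbf e_{s(k)}$. Because the $\mathbf e_s$ sum to zero over each cycle, $\mathbf c_t$ is bounded and periodic with period $S$. The recursion $\overline\theta_t=\mu\overline\theta_{t-1}+(1-\mu)\theta_t$ is linear, so the trend part and the periodic part can be analysed separately. The slow copy is
\[
\overline\theta_t=(1-\mu)\sum_{k\ge0}\mu^k\theta_{t-k}
\]
plus a term $\mu^{t}$ times the initial mismatch, which vanishes in the steady state.

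\textbf{Lag.} For the trend, substitute $\theta_{t-k}=\theta_0+(t-k)\mathbf g$. This gives
\[
(1-\mu)\sum_k\mu^k(t-k)\,\mathbf g=\Bigl(t-\tfrac{\mu}{1-\mu}\Bigr)\mathbf g,
\]
using $\sum_k k\mu^k=\mu/(1-\mu)^2$. So the slow copy tracks $\mathbf g$ delayed by $\mu/(1-\mu)$ steps, which is the claimed lag.

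\textbf{Attenuation.} Decompose $\mathbf c_t$ into its discrete Fourier modes $e^{i\omega_j t}$ with $\omega_j=2\pi j/S$, $j=1,\dots,S-1$. There is no $j=0$ mode after subtracting the mean; a constant offset is only a shift and is not oscillation. The EMA transfer function is $H(\omega)=(1-\mu)/(1-\mu e^{-i\omega})$, so
\[
|H(\omega)|^2=\frac{(1-\mu)^2}{(1-\mu)^2+4\mu\sin^2(\omega/2)},
\]
using $|1-\mu e^{-i\omega}|^2=(1-\mu)^2+2\mu(1-\cos\omega)$. Over the nonzero modes, $\sin^2(\omega_j/2)=\sin^2(\pi j/S)$ is minimised at $j=1$ and $j=S-1$, so $|H(\omega_j)|\le\rho$. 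By Parseval, taking the oscillation as the RMS (or $\ell_2$-over-a-period) norm of the zero-mean periodic part, the slow copy's oscillation is at most $\rho$ times the online one. This fixes the precise sense of "oscillation" in the statement.

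\textbf{Main obstacle.} I expect the main difficulty to be making "oscillation" precise enough that the per-mode bound transfers. A sup-norm (peak-to-peak) statement does not follow directly from a per-mode bound. I would therefore state it in the RMS/energy sense. If a peak version is wanted, one route is to bound the convolution kernel restricted to zero-mean period-$S$ sequences, but that is messier.

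\textbf{Relation targets.} Let $q=\Phi(\overline\theta)$, where $\Phi$ maps parameters to relation distributions for fixed inputs and candidates. If $\Phi$ is $L$-Lipschitz on the relevant region (softmax of cosines over bounded representations), then
\[
\|q_t-q_{t'}\|\le L\|\overline\theta_t-\overline\theta_{t'}\|.
\]
Hence the target oscillation is bounded by $L\rho$ times the online parameter oscillation, i.e. the attenuation is inherited up to the Lipschitz constant.
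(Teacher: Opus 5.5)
Your proposal is correct and follows the paper's proof essentially step for step. You use the same trend-plus-periodic decomposition, treat the EMA as a linear filter with $H(\omega)=(1-\mu)/(1-\mu e^{-\mathrm i\omega})$, bound $|H(2\pi k/S)|\le\rho$ for $1\le k\le S-1$, and apply Parseval in the RMS sense, which is also how the paper makes ``oscillation'' precise. Two small differences: the paper gets the lag as the limit of the exact finite formula $F[t]_t=t-\frac{\mu}{1-\mu}(1-\mu^t)$ rather than from the steady-state geometric sum, and it obtains the Lipschitz constant $8L/(\tau c)$ from its stable-relations lemma; that constant requires the embedding rows to be bounded \emph{away from zero} (norm at least $c$), not merely bounded as your parenthetical suggests.
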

With $\mu=0.99$ and $S=5$, $\rho\approx0.009$, so the oscillation shrinks by two orders of magnitude. In summary, PARR reduces the propagation-induced part $R$ of the relations and keeps $\overline P$ (Theorem~\ref{flo:thm:recip}), and FSKA attenuates the oscillation between source graphs (Proposition~\ref{flo:prop:slow}).

\section{Experiments}\label{sec:experiments}
In this section, we evaluate FLO on six node classification, three link prediction, and three graph classification datasets. We aim to answer the following four research questions. \textbf{Q1}: Does FLO transfer effectively to unseen graph domains? \textbf{Q2}: Does FLO learn generalizable knowledge that transfers to unseen graph tasks? \textbf{Q3}: Does FLO benefit from increasing domain diversity while keeping pre-training stable? \textbf{Q4}: How do reciprocal relation learning and reconstruction contribute to the performance of FLO? Further results are given in Appendix~\ref{app:experiments}.

\subsection{Experimental Setup}\label{sec:setup}
\textbf{Datasets.} For node classification, we use six graphs: the citation networks Cora, CiteSeer, and PubMed \citep{sen2008collective,yang2016revisiting}, the co-authorship network CS, and the co-purchase networks Photo and Computers \citep{shchur2018pitfalls}. For cross-task transfer, we use three link prediction datasets, Physics \citep{shchur2018pitfalls}, WikiCS \citep{mernyei2020wiki}, and DBLP \citep{bojchevski2018deep}, and three graph classification datasets, COIL-RAG, ENZYMES, and PROTEINS \citep{morris2020tudataset}. Dataset statistics are given in Appendix~\ref{app:datasets}.

\textbf{Baselines.} We compare FLO with ten methods in three categories: (1) graph self-supervised learning methods: DGI \citep{veličković2018deep}, GRACE \citep{zhu2020deep}, and BGRL \citep{thakoor2022largescale}; (2) feature unification methods: FUG \citep{zhao2024fug} and TIG \citep{zhao2026towards}; and (3) multi-domain graph pre-training methods: GFT \citep{wang2024gft}, MDGPT \citep{yu2024text}, SAMGPT \citep{yu2025samgpt}, MDGFM \citep{wang2025multidomain}, and BRIDGE \citep{yuan2025much}. All baselines are run with their official code and the settings recommended in their papers, as detailed in Appendix~\ref{app:baselines}, including the input preprocessing each method requires.

\textbf{Evaluation protocol.} All encoders are frozen during downstream evaluation, and for each task all methods use the same downstream classifier and the same data splits. For node classification, following FUG \citep{zhao2024fug}, we train an $\ell_2$-regularized logistic regression classifier with a 10\%/10\%/80\% training/validation/test split of the target nodes, and report the mean and standard deviation over three random splits. For few-shot node classification, following TIG \citep{zhao2026towards}, we randomly label $k\in\{1,3,5\}$ nodes per class, train the same classifier, and report the mean and standard deviation over 500 sampled tasks shared by all methods. FLO shares one set of pre-training hyperparameters across all target graphs. We evaluate the final encoder without selecting checkpoints on the target graph. The protocol of each task and the hyperparameters of FLO are given in Appendices~\ref{app:probe} and~\ref{app:implementation}. The random seed is fixed to 0.

\begin{table}[t]
\centering
\footnotesize
\setlength{\tabcolsep}{1.5pt}
\caption{Accuracy (\%) of cross-domain node classification with standard deviations. Each column is an unseen target graph. The best and second-best results are in bold and underlined.}\label{tab:nc}
\begin{tabular*}{\linewidth}{@{\extracolsep{\fill}}lccccccc}
\toprule
Method & Cora & CiteSeer & PubMed & CS & Photo & Computers & Avg. \\
\midrule
DGI & 75.47$_{\pm0.78}$ & 65.76$_{\pm1.19}$ & 82.83$_{\pm0.24}$ & 92.82$_{\pm0.18}$ & 91.48$_{\pm0.26}$ & 87.24$_{\pm0.21}$ & 82.60 \\
GRACE & 72.00$_{\pm0.36}$ & 56.29$_{\pm0.71}$ & 76.80$_{\pm0.37}$ & 90.29$_{\pm0.09}$ & 90.76$_{\pm0.15}$ & 86.82$_{\pm0.15}$ & 78.83 \\
BGRL & 79.43$_{\pm1.24}$ & 64.46$_{\pm1.14}$ & 83.91$_{\pm0.26}$ & 92.63$_{\pm0.21}$ & 92.76$_{\pm0.21}$ & 88.98$_{\pm0.16}$ & 83.69 \\
\midrule
FUG & 82.79$_{\pm0.72}$ & 70.52$_{\pm0.15}$ & 84.42$_{\pm0.23}$ & 92.41$_{\pm0.02}$ & 92.31$_{\pm0.16}$ & 87.30$_{\pm0.16}$ & 84.96 \\
TIG & \underline{84.39}$_{\pm1.08}$ & \textbf{73.08}$_{\pm0.52}$ & \underline{86.45}$_{\pm0.13}$ & \underline{93.42}$_{\pm0.05}$ & 93.27$_{\pm0.22}$ & 89.51$_{\pm0.07}$ & \underline{86.68} \\
GFT & 72.99$_{\pm0.94}$ & 64.79$_{\pm0.99}$ & 83.59$_{\pm0.12}$ & 93.06$_{\pm0.27}$ & 90.25$_{\pm0.18}$ & 84.18$_{\pm0.19}$ & 81.48 \\
MDGPT & 81.78$_{\pm0.47}$ & 72.25$_{\pm0.49}$ & 85.76$_{\pm0.24}$ & 93.02$_{\pm0.08}$ & 93.27$_{\pm0.07}$ & 90.01$_{\pm0.09}$ & 86.02 \\
SAMGPT & 81.64$_{\pm0.56}$ & 71.59$_{\pm0.34}$ & 85.44$_{\pm0.23}$ & 93.04$_{\pm0.09}$ & 92.98$_{\pm0.20}$ & 89.91$_{\pm0.16}$ & 85.77 \\
MDGFM & 81.60$_{\pm0.70}$ & 71.66$_{\pm0.61}$ & 85.63$_{\pm0.25}$ & 92.84$_{\pm0.09}$ & \underline{93.33}$_{\pm0.15}$ & \textbf{90.19}$_{\pm0.13}$ & 85.87 \\
BRIDGE & 81.59$_{\pm0.65}$ & 71.36$_{\pm0.51}$ & 85.47$_{\pm0.23}$ & 93.11$_{\pm0.05}$ & 93.20$_{\pm0.13}$ & 90.07$_{\pm0.20}$ & 85.80 \\
\midrule
\textbf{FLO} & \textbf{85.83}$_{\pm0.65}$ & \underline{72.50}$_{\pm0.91}$ & \textbf{86.74}$_{\pm0.35}$ & \textbf{93.62}$_{\pm0.05}$ & \textbf{93.62}$_{\pm0.23}$ & \underline{90.13}$_{\pm0.07}$ & \textbf{87.07} \\
\bottomrule
\end{tabular*}
\end{table}

\subsection{Cross-Domain Transfer (Q1)}\label{sec:q1}
To answer Q1, we adopt the leave-one-domain-out setting: each of the six graphs is held out in turn as the target graph, and the model is pre-trained without labels on the other five. \textbf{FLO gives the best average accuracy on unseen graph domains in both standard and few-shot node classification.}

\textbf{Node classification.} As shown in Table~\ref{tab:nc}, FLO achieves the best average accuracy, 87.07\%, and ranks first on four of the six target graphs. Feature unification methods align the input features of different graphs, and their encoders learn node relations only through message passing. FLO further refines these relations with those predicted from node attributes alone, so its encoder also acquires the knowledge beyond propagation (Theorem~\ref{flo:thm:recip}). Without this reciprocal relation learning, FLO falls below the best feature unification method on Cora and CiteSeer (Table~\ref{tab:ablation}). Multi-domain graph pre-training methods such as MDGPT and SAMGPT align domains with randomly initialized parameters updated by immediate loss feedback. FLO introduces no such parameters and takes its relation targets from slow copies, which filter cross-domain interference (Proposition~\ref{flo:prop:slow}).

\textbf{Few-shot node classification.} FLO achieves the best average accuracy with one, three, and five labeled nodes per class (Figure~\ref{fig:fewshot}), exceeding the strongest baseline by 3.71, 4.63, and 2.92 points, respectively. These margins are larger than in node classification, since with few labels the accuracy depends more on the relations among node representations, which FLO refines with the knowledge beyond propagation. The results of all methods on each target graph are given in Appendix~\ref{app:fewshot}.

\begin{figure}[t]
\centering
\includegraphics[width=\linewidth]{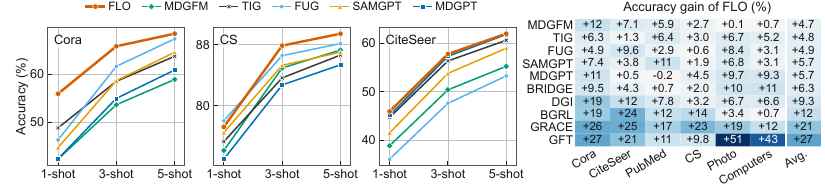}
\caption{Few-shot node classification. Left: accuracy (\%) on Cora, CS, and CiteSeer for FLO and the five baselines with the highest average accuracy. Right: accuracy gain (\%) of FLO over each baseline, averaged over the 1-, 3-, and 5-shot settings; Avg. is the mean over the six target graphs.}
\label{fig:fewshot}
\end{figure}

\subsection{Cross-Task Transfer (Q2)}\label{sec:q2}
To answer Q2, we pre-train the encoder only on the six node classification datasets and apply it, frozen, to three link prediction and three graph classification datasets. We compare FLO with the feature unification and multi-domain graph pre-training methods, which are designed for transfer across graphs. \textbf{FLO learns generalizable knowledge that transfers to unseen tasks on unseen graphs.} As shown in Table~\ref{tab:cross-task}, FLO achieves the best average on both tasks, and its margin is largest on link prediction, 9.00 points in Hits@50 over the runner-up. Link prediction scores pairs of nodes, and FLO pre-trains the encoder to predict the relations between an anchor and its candidates. On ENZYMES and PROTEINS, whose nodes have only 21 and 32 features, FLO trails the best baseline by 0.40 and 0.58 points, within the standard deviations. One possible reason is that FLO learns the knowledge beyond propagation from node features, which are scarce on these graphs.

\begin{table}[H]
\centering
\footnotesize
\setlength{\tabcolsep}{1.3pt}
\caption{Hits@50 (\%) of link prediction and accuracy (\%) of graph classification with standard deviations. The best and second-best results are in bold and underlined.}
\label{tab:cross-task}
\begin{tabular*}{\linewidth}{@{\extracolsep{\fill}}lcccc@{\hspace{12pt}}cccc}
\toprule
& \multicolumn{4}{c}{Link prediction: Hits@50 (\%)} & \multicolumn{4}{c}{Graph classification: accuracy (\%)} \\
\cmidrule(lr){2-5}\cmidrule(lr){6-9}
Method & Physics & WikiCS & DBLP & Avg. & COIL-RAG & ENZYMES & PROTEINS & Avg. \\
\midrule
FUG & \underline{55.99}$_{\pm3.38}$ & 66.23$_{\pm1.22}$ & \underline{64.18}$_{\pm5.29}$ & \underline{62.13} & \underline{84.97}$_{\pm1.50}$ & 33.41$_{\pm4.31}$ & 71.64$_{\pm1.66}$ & 63.34 \\
TIG & 44.46$_{\pm2.88}$ & 69.93$_{\pm0.59}$ & 61.85$_{\pm3.43}$ & 58.75 & 77.16$_{\pm1.53}$ & 33.33$_{\pm3.36}$ & 71.56$_{\pm2.36}$ & 60.68 \\
GFT & 42.93$_{\pm2.90}$ & 58.78$_{\pm2.13}$ & 43.41$_{\pm5.49}$ & 48.37 & 83.42$_{\pm1.62}$ & \textbf{35.88}$_{\pm5.91}$ & \underline{72.37}$_{\pm2.92}$ & \underline{63.89} \\
MDGPT & 47.79$_{\pm1.52}$ & 68.03$_{\pm1.26}$ & 57.25$_{\pm1.56}$ & 57.69 & 80.14$_{\pm2.19}$ & 31.24$_{\pm4.22}$ & 71.59$_{\pm3.22}$ & 60.99 \\
SAMGPT & 49.82$_{\pm1.86}$ & 70.12$_{\pm0.73}$ & 57.60$_{\pm1.75}$ & 59.18 & 81.91$_{\pm1.99}$ & 29.57$_{\pm4.13}$ & 70.65$_{\pm2.71}$ & 60.71 \\
MDGFM & 51.22$_{\pm2.39}$ & \underline{70.92}$_{\pm1.23}$ & 55.09$_{\pm6.77}$ & 59.08 & 81.86$_{\pm1.91}$ & 33.73$_{\pm4.32}$ & \textbf{72.54}$_{\pm2.07}$ & 62.71 \\
BRIDGE & 49.97$_{\pm3.30}$ & 69.10$_{\pm0.80}$ & 56.94$_{\pm1.58}$ & 58.67 & 82.26$_{\pm1.80}$ & 31.24$_{\pm3.59}$ & 71.14$_{\pm3.09}$ & 61.55 \\
\midrule
\textbf{FLO} & \textbf{67.37}$_{\pm9.49}$ & \textbf{72.75}$_{\pm0.98}$ & \textbf{73.28}$_{\pm0.78}$ & \textbf{71.13} & \textbf{86.59}$_{\pm1.35}$ & \underline{35.48}$_{\pm4.65}$ & 71.96$_{\pm2.12}$ & \textbf{64.68} \\
\bottomrule
\end{tabular*}
\end{table}

\subsection{Domain Diversity and Training Stability (Q3)}\label{sec:q3}

To answer Q3, we vary the number of source domains and track the pre-training loss.

\noindent\textbf{Domain diversity.} We add source domains in a fixed order, skipping the target graph, and keep the pre-training budget fixed. As shown in Figure~\ref{fig:source-count-six}(a), from one to five source domains, FLO improves by 1.37--6.31 points on all six target graphs, whereas the four multi-domain baselines change by at most 0.63 points. \textbf{FLO benefits from increasing domain diversity.}

\begin{figure}[t]
\centering
\includegraphics[width=\linewidth]{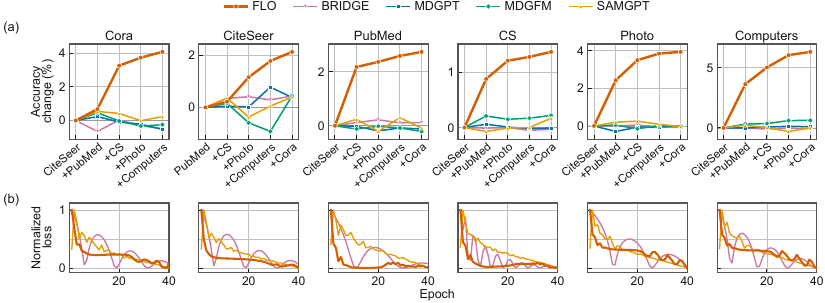}
\caption{Domain diversity and training stability. (a) Accuracy change (\%) as source domains are added in the order on the horizontal axis. (b) Min--max normalized log loss in the first 40 epochs.}
\label{fig:source-count-six}
\end{figure}

\noindent\textbf{Training stability.} We record the pre-training loss in the first 40 epochs, taking the loss of the graph encoder for FLO. As shown in Figure~\ref{fig:source-count-six}(b), the loss of FLO rises by at most 6.6\% between consecutive epochs, whereas those of BRIDGE and SAMGPT rise by up to 249\% and 371\%. \textbf{FLO keeps pre-training stable as the source domains alternate.} These observations are complementary: reduced loss oscillation alone would not establish effective knowledge acquisition, while the consistent gains from additional source domains show that stability is accompanied by improved transfer. This pattern matches the intended roles of PARR and FSKA: PARR exposes complementary relational information, while FSKA consolidates it across domains.

\subsection{Ablation Study (Q4)}\label{sec:q4}
To answer Q4, we remove or replace parts of FLO (Table~\ref{tab:ablation}). \textbf{Reciprocal relation learning improves the graph encoder through the returned relation targets, and reconstruction rules out the trivial solution of relation learning.} Removing both relation learning objectives lowers accuracy most on CiteSeer and Cora, the two graphs on which propagation changes the relations most (Section~\ref{flo:sec:theory}). Removing the slow attribute branch gives accuracy similar to removing both objectives, since the graph encoder then receives no returned targets (Theorem~\ref{flo:thm:recip}(ii)). Removing the slow graph branch lowers accuracy further, since the attribute branch is then not driven toward $\overline P$ (Theorem~\ref{flo:thm:recip}(i)) and its returned targets no longer correct the graph encoder. Removing reconstruction lowers accuracy to 30.33\% on Cora and 20.66\% on CiteSeer, close to the share of the largest class (Proposition~\ref{flo:prop:recon}). The two directions thus serve distinct roles: graph-to-attribute transfer identifies the propagation-free conditional target, whereas attribute-to-graph transfer returns this target to the transferable graph encoder. A GCN attribute branch gives no gain over removing both relation learning objectives on any target graph, so the gain depends on a propagation-free attribute branch.

\begin{table}[H]
\centering
\footnotesize
\setlength{\tabcolsep}{1pt}
\caption{Ablation study on node classification: accuracy (\%) with standard deviations. w/o slow attribute branch removes $\mathcal{L}^{G}_{\rm kd}$, the relation targets returned to the graph branch; w/o slow graph branch removes $\mathcal{L}^{M}_{\rm kd}$, the relation targets that train the attribute branch. Removing one doubles the weight of the other, so the total relation weight is unchanged. The best results are in bold.}\label{tab:ablation}
\begin{tabular*}{\linewidth}{@{\extracolsep{\fill}}lcccccc}
\toprule
Variant & Cora & CiteSeer & PubMed & CS & Photo & Computers \\
\midrule
w/o $\mathcal{L}_{\rm kd}$ & 83.26$_{\pm0.84}$ & 66.52$_{\pm1.31}$ & 86.63$_{\pm0.23}$ & 93.13$_{\pm0.09}$ & 93.34$_{\pm0.07}$ & 90.04$_{\pm0.12}$ \\
w/o slow attribute branch & 83.37$_{\pm0.89}$ & 66.98$_{\pm1.11}$ & 86.67$_{\pm0.26}$ & 93.03$_{\pm0.14}$ & 93.47$_{\pm0.15}$ & 90.04$_{\pm0.11}$ \\
w/o slow graph branch & 64.65$_{\pm0.95}$ & 42.01$_{\pm1.61}$ & 80.57$_{\pm0.32}$ & 82.58$_{\pm0.23}$ & 89.62$_{\pm0.36}$ & 84.83$_{\pm0.20}$ \\
w/o $\mathcal{L}_{\rm rec}$ & 30.33$_{\pm0.19}$ & 20.66$_{\pm0.63}$ & 82.48$_{\pm0.31}$ & 76.14$_{\pm0.92}$ & 59.72$_{\pm1.94}$ & 52.40$_{\pm1.16}$ \\
w/ GCN attribute branch & 79.74$_{\pm0.10}$ & 55.56$_{\pm0.22}$ & 86.60$_{\pm0.26}$ & 91.52$_{\pm0.15}$ & 92.02$_{\pm0.55}$ & 89.48$_{\pm0.17}$ \\
\midrule
\textbf{FLO} & \textbf{85.83}$_{\pm0.65}$ & \textbf{72.50}$_{\pm0.91}$ & \textbf{86.74}$_{\pm0.35}$ & \textbf{93.62}$_{\pm0.05}$ & \textbf{93.62}$_{\pm0.23}$ & \textbf{90.13}$_{\pm0.07}$ \\
\bottomrule
\end{tabular*}
\end{table}

\FloatBarrier
\section{Conclusion}
In this paper, we proposed FLO for multi-domain graph pre-training, where existing methods struggle to benefit from increasing domain diversity. PARR refines the node relations induced by message passing with feedback grounded in node attributes, and FSKA takes this feedback from a slow process that integrates knowledge across domains. We proved that PARR leads the graph encoder to the knowledge beyond propagation and FSKA filters cross-domain interference. On 12 datasets, FLO achieves the best average performance in cross-domain and cross-task transfer and benefits from increasing domain diversity, suggesting that domain alignment is a prerequisite rather than a guarantee of knowledge acquisition. Future work will extend FLO to graphs with scarce node features.

\bibliography{flo_references}
\bibliographystyle{flo_preprint}
\clearpage

\appendix
\section{Proofs}
\label{flo:app:theory}
\label{app:theory}
We prove the results of Section~\ref{flo:sec:theory} and one result on reconstruction. Table~\ref{tab:proof-map} lists what each result shows and the main tools of its proof. Appendix~\ref{flo:app:setting} fixes the setting and two facts used throughout. Appendix~\ref{app:relation-change} relates the relation change measured on our graphs to the gain of PARR, and Appendix~\ref{flo:app:invariance} states the invariance of relation distributions.

\begin{table}[H]
\centering\footnotesize
\caption{Results proved in this appendix.}
\label{tab:proof-map}
\begin{tabularx}{\linewidth}{@{}lXXl@{}}
\toprule
Result & What it shows & Main tools & Proof\\
\midrule
Eq.~\eqref{flo:eq:decomp} & The relational information of the graph branch splits into an attribute-predicted part and the propagation-induced part $R$ & Chain rule, tower property & \ref{flo:app:decomp}\\
Lemma~\ref{flo:lem:stable} & Close embeddings give close relation distributions & Lipschitz bounds of cosine and softmax & \ref{flo:app:stable}\\
Theorem~\ref{flo:thm:prop} & Propagation drives the relations among nearby nodes toward uniform & Edge form of the Dirichlet energy, spectral decomposition & \ref{flo:app:prop}\\
Theorem~\ref{flo:thm:recip} & Targets returned by the attribute branch remove $R$ & Conditional mean, convexity of $\operatorname{KL}$, Pinsker's inequality & \ref{flo:app:recip}\\
Proposition~\ref{flo:prop:recon} & Reconstruction rules out uniform relations & Bounds of the decoder on directions & \ref{flo:app:recon}\\
Proposition~\ref{flo:prop:slow} & Slow copies damp period-$S$ oscillations by $\rho$ & Frequency response, Parseval's identity & \ref{flo:app:slow}\\
\bottomrule
\end{tabularx}
\end{table}

\subsection{Setting and Notation}
\label{flo:app:setting}
Expectations are taken over the sampled source graph $s$, anchor, candidate slots, and masks, with the slow copies held fixed in Theorems~\ref{flo:thm:prop} and~\ref{flo:thm:recip}. The observation $\xi$ includes the masked input of the attribute branch, so the targets of the attribute branch are functions of $\xi$. Relation distributions compared in a $\operatorname{KL}$ divergence share their support and are positive on the valid slots.

For graph $s$ we write $\widetilde{\mathbf A}=\mathbf A_s+\mathbf I$, $\widetilde d_k=\sum_l\widetilde{\mathbf A}_{kl}$, $\widehat{\mathbf A}_s=\sum_r\omega_r\mathbf u_r\mathbf u_r^\top$, and $\mathbf L=\mathbf I-\widehat{\mathbf A}_s$, whose eigenvalues $1-\omega_r$ lie in $[0,2)$. For a nonzero vector $\mathbf z$, $\hat{\mathbf z}=\mathbf z/\|\mathbf z\|_2$. Theorem~\ref{flo:thm:prop} treats linear propagation. ReLU does not increase the Dirichlet energy and a weight matrix scales it by at most $\|\mathbf W\|_{\mathrm{op}}^2$, so the contraction extends to bias-free ReLU layers \citep{cai2020note}.

\textbf{Two facts.} For nonzero $\mathbf u,\mathbf b$:
\begin{equation}
\|\hat{\mathbf u}-\hat{\mathbf b}\|_2\le\frac{2\|\mathbf u-\mathbf b\|_2}{\|\mathbf b\|_2},
\label{flo:eq:normfact}
\end{equation}
which follows from $\|\hat{\mathbf u}-\hat{\mathbf b}\|\le\bigl|\|\mathbf b\|-\|\mathbf u\|\bigr|/\|\mathbf b\|+\|\mathbf u-\mathbf b\|/\|\mathbf b\|$. For nonzero vectors, $\cos(\mathbf u,\mathbf b)=1-\tfrac12\|\hat{\mathbf u}-\hat{\mathbf b}\|_2^2$.

\subsection{Information Decomposition}
\label{flo:app:decomp}
Eq.~\eqref{flo:eq:decomp} separates what node attributes predict from what propagation adds. We split the information by the chain rule and evaluate the conditional part with the tower property.

\begin{restate}{Eq.~\eqref{flo:eq:decomp}}
$I(C;P)=I(C;\xi)+R$ with $R=I(C;P\mid\xi)=\mathbb E\,\operatorname{KL}(P\|\overline P)$.
\end{restate}

\begin{proof}\textbf{Step 1 (chain rule).} Given $P$, the slot $C$ is drawn from $P$ and does not depend on $\xi$, so $I(C;\xi\mid P)=0$ and $I(C;P)=I(C;P,\xi)=I(C;\xi)+I(C;P\mid\xi)$.

\textbf{Step 2 (conditional entropy).} Given $\xi$ alone, $C$ follows $\mathbb E[P\mid\xi]=\overline P$. Hence $I(C;P\mid\xi)=H(C\mid\xi)-H(C\mid P,\xi)=\mathbb E H(\overline P)-\mathbb E H(P)$.

\textbf{Step 3 (tower property).} Since $\overline P$ is a function of $\xi$, $\mathbb E\sum_jP_j\log\overline P_j=\mathbb E\sum_j\overline P_j\log\overline P_j=-\mathbb E H(\overline P)$. Therefore $\mathbb E\,\operatorname{KL}(P\|\overline P)=-\mathbb E H(P)+\mathbb E H(\overline P)=I(C;P\mid\xi)$.
\end{proof}

\subsection{A Lemma on Stable Relations}
\label{flo:app:stable}
Theorem~\ref{flo:thm:recip}(iii) and Proposition~\ref{flo:prop:slow} compare relation distributions computed from two nearby sets of embeddings. Small changes of the embeddings give small changes of every cosine, and the softmax turns small changes of the logits into small changes of the log-probabilities.

\begin{lemma}[Stable relations]
\label{flo:lem:stable}
Let $\mathbf u_k,\mathbf b_k$ be two embeddings of the anchor ($k=0$) and its candidates, with $\|\mathbf u_k-\mathbf b_k\|_2\le\epsilon$, $\|\mathbf b_k\|_2\ge c>0$, and $\mathbf u_k\neq\mathbf 0$ for all $k$. Then their temperature-$\tau$ relation distributions satisfy:
\[
\max\bigl\{\operatorname{KL}(p_{\mathbf U}\|p_{\mathbf B}),\,\operatorname{KL}(p_{\mathbf B}\|p_{\mathbf U})\bigr\}\le\frac{8\epsilon}{\tau c}.
\]
\end{lemma}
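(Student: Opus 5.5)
The plan is to bound each cosine logit change, pass it through the softmax to a sup-norm bound on log-probabilities, and then bound both KL divergences by that sup-norm. The three steps are as follows.

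\textbf{Step 1 (normalized directions).} I apply Eq.~\eqref{flo:eq:normfact} with $\mathbf u=\mathbf u_k$, $\mathbf b=\mathbf b_k$. Together with $\|\mathbf u_k-\mathbf b_k\|_2\le\epsilon$ and $\|\mathbf b_k\|_2\ge c$, it gives $\|\hat{\mathbf u}_k-\hat{\mathbf b}_k\|_2\le 2\epsilon/c$ for every $k$.

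\textbf{Step 2 (cosine logits).} For each candidate $k$, I write the difference of cosines as a difference of inner products of unit vectors and split it:
\[
\bigl|\langle\hat{\mathbf u}_0,\hat{\mathbf u}_k\rangle-\langle\hat{\mathbf b}_0,\hat{\mathbf b}_k\rangle\bigr|
\le\bigl|\langle\hat{\mathbf u}_0-\hat{\mathbf b}_0,\hat{\mathbf u}_k\rangle\bigr|+\bigl|\langle\hat{\mathbf b}_0,\hat{\mathbf u}_k-\hat{\mathbf b}_k\rangle\bigr|\le\frac{4\epsilon}{c}.
\]
Here I use Cauchy--Schwarz and the fact that all hats have unit norm. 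The logits $\ell_k=\cos(\cdot,\cdot)/\tau$ of the two embeddings therefore differ by at most $\delta:=4\epsilon/(\tau c)$ in sup norm. The self-slot $k=0$ is covered by the same bound, and in fact its logit is exactly $1/\tau$ in both.

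\textbf{Step 3 (softmax and KL).} This step is the main obstacle, because the constant must come out as $8$ rather than something looser. The log-softmax satisfies $\log p_k=\ell_k-\log\sum_{k'}e^{\ell_{k'}}$. If every logit shifts by at most $\delta$, the log-partition shifts by at most $\delta$, since log-sum-exp is $1$-Lipschitz in sup norm. Hence
\[
\bigl|\log p_{\mathbf U}(k)-\log p_{\mathbf B}(k)\bigr|\le2\delta\quad\text{for all }k.
\]
The KL divergence is an average of log-ratios under one of the distributions, so $\operatorname{KL}(p_{\mathbf U}\|p_{\mathbf B})=\sum_k p_{\mathbf U}(k)\log\bigl(p_{\mathbf U}(k)/p_{\mathbf B}(k)\bigr)\le2\delta$. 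The same argument applies with the roles of $\mathbf U$ and $\mathbf B$ swapped. This gives $2\delta=8\epsilon/(\tau c)$, which is exactly the claimed constant.

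I would double-check that the Step 2 bound really is $4\epsilon/c$ and not larger; it is, because the triangle split uses two terms, each at most $2\epsilon/c$. Finally, $\mathbf u_k\neq\mathbf 0$ ensures every cosine is defined, and both distributions are then strictly positive, so the KL divergences are finite.
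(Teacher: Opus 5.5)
Your proposal is correct and matches the paper's proof step for step. Both arguments use Eq.~\eqref{flo:eq:normfact} for the directions, a two-term split giving a logit change of at most $4\epsilon/(\tau c)$, and the $1$-Lipschitz log-sum-exp giving a log-probability change of at most twice that, which bounds both KL divergences by $8\epsilon/(\tau c)$.
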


\begin{proof}\textbf{Step 1 (directions).} By Eq.~\eqref{flo:eq:normfact}, $\|\hat{\mathbf u}_k-\hat{\mathbf b}_k\|\le2\epsilon/c$ for every $k$.

\textbf{Step 2 (cosines and logits).} For unit vectors, $|\langle\mathbf a,\mathbf b\rangle-\langle\mathbf a',\mathbf b'\rangle|\le\|\mathbf a-\mathbf a'\|+\|\mathbf b-\mathbf b'\|$. Hence every cosine between the anchor and a candidate changes by at most $4\epsilon/c$, and every logit by at most $\Delta=4\epsilon/(\tau c)$.

\textbf{Step 3 (log-probabilities).} A log-probability is a logit minus the log-sum-exp of all logits, and the log-sum-exp changes by at most the largest change of a logit. Hence every log-probability changes by at most $2\Delta$, and so does either $\operatorname{KL}$ divergence, which is an average of these differences.
\end{proof}

\subsection{Proof of Theorem~\ref{flo:thm:prop}}
\label{flo:app:prop}
Theorem~\ref{flo:thm:prop} formalizes how propagation suppresses relational variation among nearby candidates. Small energy means that nearby nodes look alike. The anchor then cannot prefer any nearby candidate over itself, so its relations are close to uniform. Steps 1--4 bound the relations by the energy, Step 5 bounds the energy by the spectrum, and Step 6 turns the bound into a bound on information.

\begin{restate}{Theorem~\ref{flo:thm:prop}}
Let $\mathbf Z=\widehat{\mathbf A}_s^{K}\mathbf Y$, and let $P_i^h$ be the relations of anchor $i$ over its candidates within $h$ hops. Then the following holds:
\[
\operatorname{KL}(P_i^h\|U)\le \beta_i\,\mathcal E(\mathbf Z)\le \beta_i\,\zeta_s^{2K}\,\mathcal E(\mathbf Y),
\]
where $U$ is uniform and $\beta_i=2h\widetilde d_i/(\tau\|\mathbf z_i\|_2^2)$. Moreover, $I(C;P_i^h)\le\mathbb E\,\operatorname{KL}(P_i^h\|U)$.
\end{restate}

\begin{proof}\textbf{Step 1 (energy as edge differences).} With $\mathbf y_k=\mathbf z_k/\sqrt{\widetilde d_k}$:
\[
\mathcal E(\mathbf Z)=\operatorname{tr}(\mathbf Z^\top\mathbf L\mathbf Z)=\tfrac12\sum_{k,l}\widetilde{\mathbf A}_{kl}\|\mathbf y_k-\mathbf y_l\|_2^2=\sum_{\{k,l\}\in E_s}\|\mathbf y_k-\mathbf y_l\|_2^2 .
\]

\textbf{Step 2 (nearby nodes).} Let $v$ be a candidate within $h$ hops and take a shortest path from $i$ to $v$. Its edges are distinct, so by the Cauchy--Schwarz inequality $\|\mathbf y_i-\mathbf y_v\|_2\le\sqrt{h\,\mathcal E(\mathbf Z)}$.

\textbf{Step 3 (cosines).} Cosines ignore positive scaling, so $\hat{\mathbf z}_k=\hat{\mathbf y}_k$. By Eq.~\eqref{flo:eq:normfact} and $\|\mathbf y_i\|=\|\mathbf z_i\|/\sqrt{\widetilde d_i}$:
\[
\|\hat{\mathbf z}_i-\hat{\mathbf z}_v\|_2\le\frac{2\sqrt{\widetilde d_i\,h\,\mathcal E(\mathbf Z)}}{\|\mathbf z_i\|_2},
\qquad
\cos(\mathbf z_i,\mathbf z_v)\ge1-\frac{2h\widetilde d_i\,\mathcal E(\mathbf Z)}{\|\mathbf z_i\|_2^2}.
\]

\textbf{Step 4 (near-uniform relations).} The anchor slot has cosine one, so all $m$ logits lie in $[1/\tau-w,\,1/\tau]$ with $w=\beta_i\mathcal E(\mathbf Z)$. Every probability is then at most $e^{w}/m$, and $\operatorname{KL}(P_i^h\|U)=\sum_jp_j\log(mp_j)\le w$.

\textbf{Step 5 (energy contraction).} $\widehat{\mathbf A}_s$ and $\mathbf L$ share eigenvectors, so:
\[
\mathcal E(\widehat{\mathbf A}_s^K\mathbf Y)=\sum_r(1-\omega_r)\,\omega_r^{2K}\|\mathbf u_r^\top\mathbf Y\|_2^2\le\zeta_s^{2K}\sum_r(1-\omega_r)\|\mathbf u_r^\top\mathbf Y\|_2^2=\zeta_s^{2K}\mathcal E(\mathbf Y),
\]
because the terms with $\omega_r=1$ vanish and all other eigenvalues satisfy $|\omega_r|\le\zeta_s$.

\textbf{Step 6 (information).} For a random distribution $P$ and $C$ drawn from it, $\mathbb E\,\operatorname{KL}(P\|U)=\log m-\mathbb E H(P)=\operatorname{KL}(\mathbb E P\|U)+I(C;P)\ge I(C;P)$.
\end{proof}

\subsection{Proof of Theorem~\ref{flo:thm:recip}}
\label{flo:app:recip}
Theorem~\ref{flo:thm:recip} is the basis of PARR. The attribute branch sees only $\xi$, so the best it can do is the conditional mean $\overline P$. Mixing the graph relations with any function of $\xi$ shrinks their distance to that mean. Part (i) identifies the best attribute target, Part (ii) shows what a returned target removes, and Part (iii) relates $R$ to the Dirichlet energy.

\begin{restate}{Theorem~\ref{flo:thm:recip}}
(i) For every attribute branch $Q$, $\;\mathbb E\,\operatorname{KL}(P\|Q)=R+\mathbb E\,\operatorname{KL}(\overline P\|Q)\ge R$, with equality if and only if $Q=\overline P$.
(ii) If the relations of the graph branch move toward a returned target $Q$, that is, $P'=(1-\eta)P+\eta Q$ with $\eta\in[0,1]$, then $R'\le(1-\eta)R$. If $Q=\overline P$, then in addition $I(C';\xi)=I(C;\xi)$.
(iii) $\mathbb E\|P-\overline P\|_1^2\le2R$, and under the conditions of Step 2 of Part (iii) below, $R\le\kappa\sum_s\pi_s\sqrt{\mathcal E(\mathbf X_s)}$.
\end{restate}

\begin{proof}\textbf{Part (i).} $Q$ and $\overline P$ are functions of $\xi$. Replacing $P$ by $\mathbb E[P\mid\xi]=\overline P$ inside the expectation gives:
\[
\mathbb E\,\operatorname{KL}(P\|Q)-\mathbb E\,\operatorname{KL}(P\|\overline P)=\mathbb E\sum_jP_j\log\frac{\overline P_j}{Q_j}=\mathbb E\sum_j\overline P_j\log\frac{\overline P_j}{Q_j}=\mathbb E\,\operatorname{KL}(\overline P\|Q)\ge0,
\]
with equality if and only if $Q=\overline P$ almost surely.

\textbf{Part (ii).} \emph{Step 1 (new attribute-predicted relations).} Since $Q$ is a function of $\xi$, the new attribute-predicted relations are $\overline P'=\mathbb E[P'\mid\xi]=(1-\eta)\overline P+\eta Q$.

\emph{Step 2 (convexity).} The $\operatorname{KL}$ divergence is jointly convex, so for every realization:
\[
\operatorname{KL}\bigl((1-\eta)P+\eta Q\,\big\|\,(1-\eta)\overline P+\eta Q\bigr)\le(1-\eta)\operatorname{KL}(P\|\overline P)+\eta\,\operatorname{KL}(Q\|Q)=(1-\eta)\operatorname{KL}(P\|\overline P).
\]
Taking expectations gives $R'\le(1-\eta)R$.

\emph{Step 3 (the attribute-predicted part is kept).} If $Q=\overline P$, then $\overline P'=\overline P$. The attribute-predicted information $I(C;\xi)=\mathbb E\,\operatorname{KL}(\overline P\|\mathbb E\overline P)$ depends only on $\overline P$ and is therefore unchanged.

\textbf{Part (iii).} \emph{Step 1 (Pinsker's inequality).} Pinsker's inequality gives $\|P-\overline P\|_1^2\le2\operatorname{KL}(P\|\overline P)$; taking expectations gives $\mathbb E\|P-\overline P\|_1^2\le2R$.

\emph{Step 2 (a target computed without propagation).} Assume that the graph branch computes relations from $\mathbf U=\widehat{\mathbf A}_s\mathbf X_s\mathbf W$, that $\xi$ determines the rows of $\mathbf B=\mathbf X_s\mathbf W$ for the anchor and its candidates, and that these rows have norm at least $c$. The relations $Q_B$ computed from $\mathbf B$ are a function of $\xi$, so Part (i) gives $R\le\mathbb E\,\operatorname{KL}(P\|Q_B)$.

\emph{Step 3 (row changes).} Each row changes by $\|(\mathbf U-\mathbf B)_k\|=\|(\mathbf L\mathbf X_s\mathbf W)_k\|\le\|\mathbf W\|_{\mathrm{op}}\|\mathbf L\mathbf X_s\|_F$, and $\|\mathbf L\mathbf X_s\|_F^2=\operatorname{tr}(\mathbf X_s^\top\mathbf L^2\mathbf X_s)\le2\,\mathcal E(\mathbf X_s)$ because the eigenvalues of $\mathbf L$ lie in $[0,2)$.

\emph{Step 4 (averaging over graphs).} Lemma~\ref{flo:lem:stable} with $\epsilon=\|\mathbf W\|_{\mathrm{op}}\sqrt{2\mathcal E(\mathbf X_s)}$ bounds $\operatorname{KL}(P\|Q_B)$ on graph $s$. Averaging over $s$ gives $R\le\kappa\sum_s\pi_s\sqrt{\mathcal E(\mathbf X_s)}$ with $\kappa=8\sqrt2\,\|\mathbf W\|_{\mathrm{op}}/(\tau c)$.
\end{proof}

\textbf{Remark (a non-ideal target).} For logits $\mathbf a_G$ with $p_G=\operatorname{softmax}(\mathbf a_G)$, a returned target $Q$ induces the gradient $p_G-Q$ of $\operatorname{KL}(Q\|p_G)$. It differs from the gradient $p_G-\overline P$ of the ideal target by $\overline P-Q$, and Pinsker's inequality with Part (i) gives $\mathbb E\|\overline P-Q\|_2^2\le2\,\mathbb E\,\operatorname{KL}(\overline P\|Q)=2\bigl[\mathbb E\,\operatorname{KL}(P\|Q)-R\bigr]$. The error of the returned target is therefore set by how well the attribute branch fits the graph branch.

\subsection{Reconstruction Rules Out Uniform Relations}
\label{flo:app:recon}
Relation learning alone admits the trivial solution of uniform relations (Section~\ref{flo:sec:parr}). The proposition below shows that reconstruction rules it out. If two embeddings pointed in the same direction, they would decode to the same feature.

\begin{proposition}
\label{flo:prop:recon}
Let a branch decode the unit feature $\mathbf x_k$ as $\operatorname{RowNorm}(D\mathbf z_k)$ with a linear decoder $D$ of smallest singular value $s_{\min}>0$, and let $\epsilon_k=\|\operatorname{RowNorm}(D\mathbf z_k)-\mathbf x_k\|_2$. Then the following holds:
\[
\|\hat{\mathbf z}_i-\hat{\mathbf z}_j\|_2\ge\frac{s_{\min}}{2\|D\|_{\mathrm{op}}}\bigl(\|\mathbf x_i-\mathbf x_j\|_2-\epsilon_i-\epsilon_j\bigr).
\]
\end{proposition}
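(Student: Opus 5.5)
The plan is to chain a triangle inequality in feature space with the first of the two facts in Appendix~\ref{flo:app:setting}, Eq.~\eqref{flo:eq:normfact}, applied to the decoded vectors. If the right-hand side of the claim is nonpositive there is nothing to prove, so assume $\|\mathbf x_i-\mathbf x_j\|_2>\epsilon_i+\epsilon_j$. Throughout, $\mathbf z_i,\mathbf z_j$ are nonzero, so that $\hat{\mathbf z}_i,\hat{\mathbf z}_j$ are defined.

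\textbf{Step 1 (triangle inequality on the decoded features).} Write $\mathbf r_k=\operatorname{RowNorm}(D\mathbf z_k)$. By the definition of $\epsilon_k$,
\[
\|\mathbf x_i-\mathbf x_j\|_2\le\epsilon_i+\epsilon_j+\|\mathbf r_i-\mathbf r_j\|_2 .
\]
The remaining task is to bound $\|\mathbf r_i-\mathbf r_j\|_2$ by a multiple of $\|\hat{\mathbf z}_i-\hat{\mathbf z}_j\|_2$.

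\textbf{Step 2 (remove the embedding norms).} I read $s_{\min}>0$ as saying that $D$ has full column rank, with $\|D\mathbf v\|_2\ge s_{\min}\|\mathbf v\|_2$ for every $\mathbf v$. Then $D\mathbf z_k\neq\mathbf 0$, so $\operatorname{RowNorm}$ acts as the genuine normalization, with no zero-row convention. Since $D$ is linear and $\operatorname{RowNorm}$ ignores positive scaling, $\mathbf r_k=\operatorname{RowNorm}(D\hat{\mathbf z}_k)$. This reduction matters: the claimed bound involves only directions, so the norms of the embeddings must drop out.

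\textbf{Step 3 (Lipschitz bound for normalization).} Apply Eq.~\eqref{flo:eq:normfact} with $\mathbf u=D\hat{\mathbf z}_i$ and $\mathbf b=D\hat{\mathbf z}_j$. Because $\hat{\mathbf z}_j$ is a unit vector, $\|\mathbf b\|_2\ge s_{\min}$. Also $\|\mathbf u-\mathbf b\|_2\le\|D\|_{\mathrm{op}}\|\hat{\mathbf z}_i-\hat{\mathbf z}_j\|_2$. Together these give
\[
\|\mathbf r_i-\mathbf r_j\|_2\le\frac{2\|D\|_{\mathrm{op}}}{s_{\min}}\,\|\hat{\mathbf z}_i-\hat{\mathbf z}_j\|_2 .
\]
Substituting this into Step 1 and rearranging yields the proposition.

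The only delicate point is Step 2. The lower bound $\|D\hat{\mathbf z}\|_2\ge s_{\min}$ requires $s_{\min}$ to be the smallest singular value over the input space, meaning $D$ is injective, which is the case when the decoder maps a lower-dimensional embedding to $\mathbb R^d$. If instead the embedding dimension exceeded $d$, I would restrict $D$ to the orthogonal complement of its kernel. Beyond that the argument is routine.

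For the consequence used in Section~\ref{flo:sec:parr}, I would note that the identity $\cos=1-\tfrac12\|\hat{\mathbf z}_i-\hat{\mathbf z}_j\|_2^2$ turns this bound into an upper bound on the cosine of nodes with distinct features. So when $\epsilon_k$ is small, the logits in Eq.~\eqref{flo:eq:relations} cannot all coincide, and uniform relations are excluded.
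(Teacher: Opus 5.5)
Your argument is correct and is the paper's own proof: scale invariance gives $\operatorname{RowNorm}(D\mathbf z_k)=\operatorname{RowNorm}(D\hat{\mathbf z}_k)$, Eq.~\eqref{flo:eq:normfact} with $\|D\hat{\mathbf z}_j\|_2\ge s_{\min}$ gives the factor $2\|D\|_{\mathrm{op}}/s_{\min}$, and the triangle inequality in feature space finishes. One caution about your aside: restricting $D$ to $(\ker D)^\perp$ would not rescue a non-injective decoder, because $\hat{\mathbf z}_j$ can have a kernel component that makes $\|D\hat{\mathbf z}_j\|_2$ arbitrarily small (for instance $D=[1\;\,0]$ with $\hat{\mathbf z}_{i,j}=(\pm\delta,\sqrt{1-\delta^2})$ and $\mathbf x_{i,j}=\pm1$ gives $\epsilon_i=\epsilon_j=0$ but violates the bound for $\delta<1/2$), so injectivity of $D$ is genuinely required --- it does hold in FLO, where the decoders are square.
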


\begin{proof}\textbf{Step 1 (the decoder on directions).} Let $g(\mathbf z)=D\mathbf z/\|D\mathbf z\|_2$. It ignores positive scaling, so $g(\mathbf z)=g(\hat{\mathbf z})$. By Eq.~\eqref{flo:eq:normfact} and $\|D\hat{\mathbf z}_j\|\ge s_{\min}$, $\|g(\hat{\mathbf z}_i)-g(\hat{\mathbf z}_j)\|\le2\|D\|_{\mathrm{op}}\|\hat{\mathbf z}_i-\hat{\mathbf z}_j\|/s_{\min}$.

\textbf{Step 2 (triangle inequality).} The triangle inequality gives $\|\mathbf x_i-\mathbf x_j\|\le\epsilon_i+\epsilon_j+\|g(\hat{\mathbf z}_i)-g(\hat{\mathbf z}_j)\|$. Combining with Step 1 proves the claim.
\end{proof}

Uniform relations would give every candidate the same cosine as the anchor slot, namely one, and hence $\hat{\mathbf z}_j=\hat{\mathbf z}_i$. The proposition rules this out for every candidate whose features differ from those of the anchor by more than $\epsilon_i+\epsilon_j$.

\subsection{Proof of Proposition~\ref{flo:prop:slow}}
\label{flo:app:slow}
Proposition~\ref{flo:prop:slow} is the basis of FSKA. The slow update is a linear filter. It passes slow trends and damps oscillations whose period is short compared with $1/(1-\mu)$ steps. Steps 1--2 separate the shared trend from the oscillation, Steps 3--5 treat each part, and Step 6 carries the bound to the relation targets.

\begin{restate}{Proposition~\ref{flo:prop:slow}}
Let the source graphs be visited in turn, $\delta_t=\mathbf g+\mathbf e_{s(t)}$ with $\sum_s\mathbf e_s=\mathbf 0$, and $\overline\theta_t=\mu\overline\theta_{t-1}+(1-\mu)\theta_t$ with $\overline\theta_0=\theta_0$. Then $\overline\theta_t$ follows $\mathbf g$ with a lag of $\mu/(1-\mu)$ steps, and in the steady state the root mean square of its period-$S$ oscillation over one period is at most $\rho$ times that of $\theta_t$.
\end{restate}

\begin{proof}\textbf{Step 1 (decomposition).} $\theta_t=\theta_0+t\,\mathbf g+\mathbf p_t$ with $\mathbf p_t=\sum_{j\le t}\mathbf e_{s(j)}$. A full pass adds $\sum_s\mathbf e_s=\mathbf 0$, so $\mathbf p_{t+S}=\mathbf p_t$.

\textbf{Step 2 (linearity).} Write $F$ for the map from a sequence $a_t$ to $\bar a_t=\mu\bar a_{t-1}+(1-\mu)a_t$ with $\bar a_0=a_0$. It is linear and leaves constants unchanged, so $\overline\theta_t=\theta_0+F[t\mathbf g]_t+F[\mathbf p]_t$.

\textbf{Step 3 (the shared part).} By induction on $t$, $F[t]_t=t-\frac{\mu}{1-\mu}(1-\mu^t)$. The shared part therefore passes with a lag that tends to $\mu/(1-\mu)$ steps.

\textbf{Step 4 (the oscillation).} Expand $\mathbf p_t=\mathbf m+\sum_{k=1}^{S-1}\mathbf a_k e^{\mathrm i2\pi kt/S}$. Substituting $e^{\mathrm i\omega t}$ into the recursion shows that $F$ multiplies it by $H(\omega)=(1-\mu)/(1-\mu e^{-\mathrm i\omega})$, up to a transient that decays as $\mu^t$. Since $H(0)=1$, the steady state is $F[\mathbf p]_t=\mathbf m+\sum_kH(2\pi k/S)\,\mathbf a_ke^{\mathrm i2\pi kt/S}$.

\textbf{Step 5 (the bound).} For $1\le k\le S-1$:
\[
|H(2\pi k/S)|^2=\frac{(1-\mu)^2}{1-2\mu\cos(2\pi k/S)+\mu^2}=\frac{(1-\mu)^2}{(1-\mu)^2+4\mu\sin^2(\pi k/S)}\le\rho^2,
\]
because $\sin^2(\pi k/S)\ge\sin^2(\pi/S)$. Parseval's identity over one period gives:
\[
\frac1S\sum_{t=1}^{S}\|F[\mathbf p]_t-\mathbf m\|_2^2=\sum_k|H(2\pi k/S)|^2\|\mathbf a_k\|_2^2\le\rho^2\,\frac1S\sum_{t=1}^{S}\|\mathbf p_t-\mathbf m\|_2^2 .
\]

\textbf{Step 6 (relation targets).} If the embeddings are $L$-Lipschitz in the parameters and their rows have norm at least $c$, Lemma~\ref{flo:lem:stable} bounds the $\operatorname{KL}$ divergence between the targets at two parameter values by $8L/(\tau c)$ times their distance. The oscillation of the targets is therefore at most $8L/(\tau c)$ times the oscillation of the parameters that produce them.
\end{proof}

\textbf{Remarks.} The bound is attained when the oscillation has a single frequency, as for $S=2$. For an arbitrary update sequence, $F$ still multiplies every frequency component by $H(\omega)$, so components at the domain frequencies $2\pi k/S$ are attenuated at least by $\rho$. A per-step form also holds: $\overline\theta_t-\overline\theta_{t-1}=(1-\mu)(\theta_t-\overline\theta_{t-1})$, so $\operatorname{KL}(q(\overline\theta_t)\|q(\overline\theta_{t-1}))\le\frac{8L}{\tau c}(1-\mu)\|\theta_t-\overline\theta_{t-1}\|_2$. Keeping $\rho$ fixed as $S$ grows requires $1-\mu$ to shrink roughly in proportion to $1/S$.

\subsection{Relation Change and the Effectiveness of PARR}
\label{app:relation-change}
\textbf{Propagation flattening on real graphs.} We first verify that the flattening characterized by Theorem~\ref{flo:thm:prop} appears under the candidate construction of FLO, which also contains random-walk and uniformly sampled candidates. For each node, we compute its relation distribution over a candidate set of FLO (Eq.~\eqref{flo:eq:relations}, $\tau=0.2$, $\mathbf W=\mathbf I$) from $\mathbf X$, $\widehat{\mathbf A}\mathbf X$, and $\widehat{\mathbf A}^2\mathbf X$, and divide its KL divergence to the uniform distribution by the logarithm of the number of valid candidate slots, averaged over five draws of the candidate sets. Without any training, one propagation step reduces this divergence for at least 99.5\% of the nodes on every graph, and a second step reduces it further for at least 99.9\%. After two steps, the median divergence decreases by a factor of 2.0--2.3 on Cora, CiteSeer, PubMed, and CS, and by factors of 16 and 19 on Photo and Computers.

\textbf{Relation change and the gain of PARR.} We then test whether this relation change predicts when PARR helps. In the proof of Theorem~\ref{flo:thm:recip}(iii), $R$ is bounded by $\mathbb E\,\operatorname{KL}(P\|Q_B)$, the divergence between relations computed with and without propagation. We measure this quantity on the six graphs of our experiments. For each graph, we take the preprocessed features $\mathbf X$ of Eq.~\eqref{eq:frontend} with $\mathbf W=\mathbf I$, sample 4,096 anchor nodes and their candidate sets as in FLO (the anchor, three random walks of length three, and 16 uniformly sampled nodes), and compute relation distributions (Eq.~\eqref{flo:eq:relations}, $\tau=0.2$) from $\widehat{\mathbf A}^{K}\mathbf X$ and from $\mathbf X$. Table~\ref{tab:relation-change} reports the average KL divergence for $K=1,2$ together with the accuracy drop on each target graph when both relation learning objectives are removed (Table~\ref{tab:ablation}). Propagation changes the relations most on CiteSeer and Cora, for both $K$, and these are the two target graphs on which relation learning contributes most; on the other four graphs, both the relation change and the drop are small (Figure~\ref{fig:relation-change-gain}). Across the six graphs, the Pearson correlation between the relation change and the drop is 0.89 for $K=1$ and 0.81 for $K=2$, and it stays between 0.86 and 0.92 for $K=1$ when any one graph is removed. Since $Q_B$ is a function of $\xi$, Theorem~\ref{flo:thm:recip}(i) gives $\mathbb E\,\operatorname{KL}(P\|Q_B)=R+\mathbb E\,\operatorname{KL}(\overline P\|Q_B)\geq R$, so the measured change bounds $R$ from above and also contains the difference between the attribute-predictable relation and the relation computed from the input features; it is a graph-level measurement rather than an estimate of $R_s$. PARR therefore helps most on the graphs where propagation changes the relations most, which links its gain to the propagation-induced part $R$ analyzed in Theorems~\ref{flo:thm:prop} and~\ref{flo:thm:recip}.

\begin{table}[htbp]
\centering\footnotesize
\caption{Relation change caused by propagation, $\mathbb E\,\operatorname{KL}(P\|Q_B)$, and the accuracy drop (\%) when relation learning is removed.}
\label{tab:relation-change}
\begin{tabular*}{\linewidth}{@{\extracolsep{\fill}}lcccccc}
\toprule
 & Cora & CiteSeer & PubMed & CS & Photo & Computers\\
\midrule
Relation change, $K=1$ & 0.70 & \textbf{0.77} & 0.41 & 0.44 & 0.53 & 0.53\\
Relation change, $K=2$ & \textbf{1.10} & 1.09 & 0.83 & 0.67 & 0.61 & 0.61\\
Drop without $\mathcal L_{\rm kd}$ & 2.57 & \textbf{5.98} & 0.11 & 0.49 & 0.28 & 0.09\\
\bottomrule
\end{tabular*}
\end{table}

\begin{figure}[htbp]
\centering
\includegraphics[width=\linewidth]{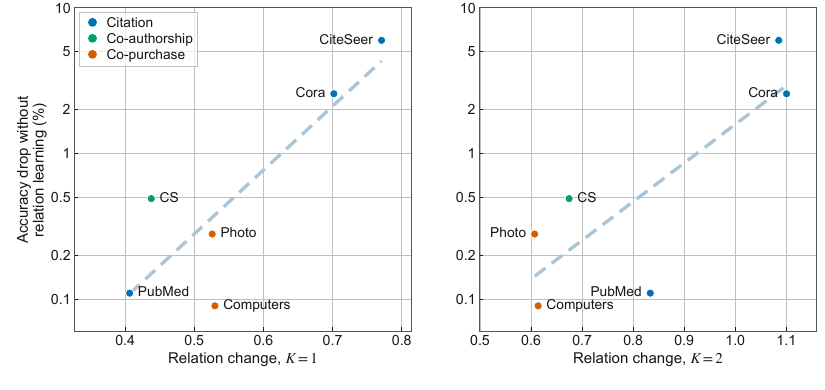}
\caption{Relation change caused by propagation and the accuracy drop (\%) when relation learning is removed, for each target graph. The values are those of Table~\ref{tab:relation-change}. The vertical axis is logarithmic, and the dashed lines are least-squares fits of the logarithm of the drop.}
\label{fig:relation-change-gain}
\end{figure}

\subsection{Relation Invariance}
\label{flo:app:invariance}
The two branches exchange relation distributions instead of embeddings (Section~\ref{flo:sec:parr}). This works because, for nonzero embedding rows, cosine relation distributions are invariant to a common orthogonal change of basis and to independent positive row scaling:
\[
p_{\mathbf D_\nu\mathbf Z\mathbf R}=p_{\mathbf Z},
\qquad
\mathbf R^\top\mathbf R=\mathbf I,\quad
\mathbf D_\nu=\operatorname{diag}(\nu_1,\ldots,\nu_n),\quad\nu_i>0.
\]
Both operations leave every cosine in Eq.~\eqref{flo:eq:relations} unchanged, since $\cos(\nu_i\mathbf R^\top\mathbf z_i,\nu_c\mathbf R^\top\mathbf z_c)=\cos(\mathbf z_i,\mathbf z_c)$ for $\nu_i,\nu_c>0$. Relation learning across the two branches therefore does not require their embeddings to share coordinates. Each slow copy tracks its own online branch, and no parameters are averaged between the graph branch and the attribute branch.

\section{Pre-training Algorithm and Implementation}
\label{app:implementation}
This appendix gives the pre-training procedure of FLO (Algorithm~\ref{alg:flo}), its hyperparameters (Table~\ref{tab:hyperparameters}), the details of candidate sampling and input masking, and its computational cost (Table~\ref{tab:efficiency}).

\begin{algorithm}[H]
\caption{Multi-domain pre-training of FLO}
\label{alg:flo}
\small
\textbf{Input:} source graphs $\{(\mathbf A_s,\mathbf X_s^{\rm raw})\}_{s=1}^{S}$, epochs $E$, warm-up length $T_w$, relation weight $\lambda$, momentum $\mu$, temperature $\tau$.\\
\textbf{Output:} graph encoder $f_G(\cdot\,;\theta_G)$.
\begin{enumerate}[label=\arabic*:,leftmargin=2.2em,itemsep=1pt,topsep=3pt]
\item $\mathbf X_s\leftarrow$ Eq.~\eqref{eq:frontend} for $s=1,\ldots,S$; initialize $\theta_G,\theta_M,D_G,D_M$; $\overline\theta_a\leftarrow\theta_a$ for $a\in\{G,M\}$
\item \textbf{for} $e=1,\ldots,E$ \textbf{do}
\item \hspace{1em}$r_e\leftarrow\min(1,e/T_w)$
\item \hspace{1em}\textbf{for} $s=1,\ldots,S$ \textbf{do}
\item \hspace{2em}draw masks $\mathbf M_G,\mathbf M_M$; $\mathbf Z_G,\mathbf Z_M\leftarrow$ Eq.~\eqref{flo:eq:encoding}; $\mathcal L_{\rm rec}^G,\mathcal L_{\rm rec}^M\leftarrow$ Eq.~\eqref{flo:eq:recon}
\item \hspace{2em}sample anchors $\mathcal I$ and candidate sets $\{\mathcal C_i\}_{i\in\mathcal I}$
\item \hspace{2em}$p_G,p_M\leftarrow$ Eq.~\eqref{flo:eq:relations} on $\mathbf Z_G,\mathbf Z_M$; $q_G,q_M\leftarrow$ Eq.~\eqref{flo:eq:relations} on the slow copies, without gradients
\item \hspace{2em}$\mathcal L\leftarrow$ Eq.~\eqref{flo:eq:objective} with $\mathcal L_{\rm kd}^G,\mathcal L_{\rm kd}^M$ from Eq.~\eqref{flo:eq:bidirectional}
\item \hspace{2em}update $\theta_G,\theta_M,D_G,D_M$ by one AdamW step on $\mathcal L$
\item \hspace{2em}$\overline\theta_a\leftarrow\mu\,\overline\theta_a+(1-\mu)\,\theta_a$ for $a\in\{G,M\}$ \hfill Eq.~\eqref{flo:eq:ema-update}
\item \textbf{return} $f_G(\cdot\,;\theta_G)$
\end{enumerate}
\end{algorithm}

\textbf{Algorithm.} Each epoch visits the $S$ source graphs in order and takes one update on each. The masks of the two branches are drawn independently. Both online branches and both slow copies use the same candidate sets, and each slow copy takes the same masked input as its online branch. After pre-training, the graph encoder is frozen and applied to the target graph without masks, and only the downstream readout is fitted.

\begin{table}[htbp]
\centering\footnotesize
\caption{Hyperparameters of FLO. The pre-training hyperparameters are shared across all target graphs; the last two rows apply only to few-shot node classification.}
\label{tab:hyperparameters}
\begin{tabular*}{\linewidth}{@{\extracolsep{\fill}}lll}\toprule
Component & Parameter & Value\\\midrule
Input & Common dimension $d$ & 1,024\\
Both branches & Hidden width; output width & 512; 1,024\\
Graph branch & Architecture & GCN--PReLU--GCN\\
Attribute branch & Architecture & Linear--GELU--Linear\\
Decoders & Architecture & Separate linear maps\\
Mask & Elementwise keep probability & 0.40\\
Reconstruction & Coefficient $\gamma$ for mean squared row distance & $1/256$\\
Candidates & Self + walk visits + global draws & $1+3\times3+16=26$\\
Anchors & Maximum per source step $N$ & 4,096\\
Relations & Temperature $\tau$; weight $\lambda$ & 0.2; 0.5\\
Warm-up & $T_w$ & 100 epochs\\
EMA & Momentum $\mu$; update frequency & 0.99; each source step\\
Optimizer & Algorithm; learning rate; weight decay & AdamW; $10^{-3}$; $10^{-4}$\\
Training & Epochs; pre-training seed & 3,000; 0\\\addlinespace
Few-shot & Singular components kept for the target graph & \begin{tabular}[t]{@{}l@{}}Cora 16, CiteSeer 8, PubMed 8,\\ CS 256, Photo 64, Computers 512\end{tabular}\\
 & Propagation steps with $\widehat{\mathbf A}$ & Cora 5, others 1\\\bottomrule
\end{tabular*}
\end{table}

\textbf{Candidate sampling and masks.} Three random walks each contribute every visited node over three steps. Sixteen additional nodes are drawn uniformly with replacement. Repeated nodes occupy separate slots; a global draw may equal the anchor or a neighbor. A walk without a valid next neighbor contributes an invalid slot, excluded from the softmax. Input masking samples a full $n_s\times d$ Bernoulli array independently for each branch; columns are not shared. The masks are not divided by their keep probability. The self slot remains valid. Row normalization uses an epsilon floor for zero-norm inputs. When only one direction of relation learning is used in the ablation, its weight is doubled, so that the total weight of relation learning is the same as in FLO.

\textbf{GCN attribute branch.} This variant in Table~\ref{tab:ablation} uses a two-layer GCN with the widths and activation of the graph encoder as the attribute branch; everything else is as in FLO.

\textbf{Computational cost.} In the main configuration, input and output widths both equal $d=1024$. With hidden width $h$ and $|\mathcal C_i|=26$ candidate slots, a source step costs $O(n_sdh+n_sd^2+|\mathcal T_s|(d+h))$ for the online branches, their slow copies, and reconstruction, plus $O(|\mathcal I||\mathcal C_i|d)$ for the relation distributions. The singular value decomposition is a one-time preprocessing cost. Transfer requires only the two-layer online GNN.

Table~\ref{tab:efficiency} reports the measured cost of all methods. The pre-training time is the median over 20 epochs, and the forward time is the median over 20 passes after 3 warm-up passes. FLO pre-trains about as fast as DGI and MDGPT, at 0.14--0.19\,s per epoch, and faster than the other multi-domain graph pre-training methods. Its peak memory of 5.6\,GB is the second lowest among all methods and less than a third of that of MDGPT, MDGFM, and BRIDGE. The extra cost of FLO lies in pre-training: the attribute branch and the two decoders raise its trainable parameters to 4.20M. At inference, FLO keeps only the graph encoder, a two-layer GNN with 1.05M parameters, a quarter of this number. Its forward pass is faster than those of MDGPT, SAMGPT, MDGFM, and BRIDGE on every target graph.
\begin{table}[!t]
\centering
\footnotesize
\setlength{\tabcolsep}{2.5pt}
\caption{Computational cost. Top: pre-training time per epoch (s) with each graph held out as the target, peak GPU memory during pre-training, and trainable parameters. Bottom: forward time (ms) of the frozen encoder on each target graph and parameters at inference. All methods are measured on the same NVIDIA RTX 5090 GPU.}
\label{tab:efficiency}
\begin{tabular*}{\linewidth}{@{\extracolsep{\fill}}lcccccccc}
\toprule
Method & Cora & CiteSeer & PubMed & CS & Photo & Computers & Memory (GB) & Params (M) \\
\midrule
\multicolumn{9}{c}{\textbf{Pre-training time per epoch (s)}} \\
\midrule
DGI & 0.189 & 0.189 & 0.145 & 0.148 & 0.177 & 0.159 & 19.6 & 0.39 \\
GRACE & 0.747 & 1.063 & 0.532 & 0.560 & 0.813 & 0.739 & 12.9 & 0.39 \\
BGRL & 0.102 & 0.102 & 0.099 & 0.097 & 0.076 & 0.064 & 3.5 & 0.53 \\
FUG & 0.167 & 0.165 & 0.152 & 0.133 & 0.138 & 0.110 & 10.6 & 10.49 \\
TIG & 0.231 & 0.230 & 0.209 & 0.201 & 0.183 & 0.152 & 14.7 & 3.81 \\
GFT & 15.30 & 15.15 & 11.98 & 10.90 & 11.88 & 8.59 & 14.2 & 12.80 \\
MDGPT & 0.179 & 0.165 & 0.126 & 0.128 & 0.153 & 0.142 & 25.2 & 0.15 \\
SAMGPT & 0.345 & 0.336 & 0.281 & 0.280 & 0.325 & 0.282 & 7.1 & 0.28 \\
MDGFM & 1.871 & 1.854 & 1.156 & 1.231 & 1.765 & 1.512 & 20.0 & 0.15 \\
BRIDGE & 0.784 & 0.809 & 0.586 & 0.600 & 0.722 & 0.661 & 27.2 & 0.15 \\
\midrule
\textbf{FLO} & 0.188 & 0.178 & 0.150 & 0.147 & 0.169 & 0.136 & 5.6 & 4.20 \\
\midrule
\multicolumn{9}{c}{\textbf{Inference time per forward pass (ms)}} \\
\midrule
DGI & 1.29 & 1.35 & 3.18 & 3.14 & 1.98 & 2.99 & \NA & 0.13 \\
GRACE & 1.12 & 1.18 & 3.33 & 3.87 & 3.30 & 6.03 & \NA & 0.26 \\
BGRL & 1.42 & 1.45 & 3.58 & 4.05 & 3.49 & 6.22 & \NA & 0.26 \\
FUG & 3.54 & 3.86 & 18.56 & 19.95 & 13.74 & 26.29 & \NA & 10.49 \\
TIG & 1.36 & 2.25 & 8.53 & 9.88 & 7.63 & 15.71 & \NA & 1.71 \\
GFT & 1.34 & 1.46 & 10.96 & 13.00 & 10.63 & 21.57 & \NA & 2.36 \\
MDGPT & 4.30 & 4.33 & 20.45 & 31.28 & 34.88 & 74.03 & \NA & 0.14 \\
SAMGPT & 4.34 & 4.36 & 20.61 & 31.55 & 35.40 & 74.67 & \NA & 0.14 \\
MDGFM & 4.40 & 4.39 & 20.66 & 32.00 & 46.15 & 95.42 & \NA & 0.14 \\
BRIDGE & 4.29 & 4.29 & 20.40 & 41.13 & 46.52 & 95.48 & \NA & 0.14 \\
\midrule
\textbf{FLO} & 2.61 & 2.80 & 13.54 & 12.04 & 7.63 & 15.10 & \NA & 1.05 \\
\bottomrule
\end{tabular*}
\end{table}

\section{Related Work}
\label{app:relatedwork}

\paragraph{Graph pre-training and graph foundation models.}
Graph pre-training trains GNNs with self-supervised objectives, which are either contrastive \citep{veličković2018deep,you2020graph,zhu2020deep,thakoor2022largescale} or generative \citep{hou2022graphmae}.
Early studies transfer structural patterns across graphs \citep{qiu2020gcc} or pre-train a GNN at both the node and the graph level \citep{hu2020strategies}.
Most of these methods pre-train within a single domain.
Multi-domain graph pre-training has recently attracted growing interest as a route toward GFMs \citep{mao2024position,liu2025graph}.
One line of work represents node attributes as text and unifies them with language models.
OFA casts node, link, and graph classification into a single prompt-graph format \citep{liu2024one}, ZeroG encodes node attributes and class semantics with a language model for zero-shot transfer across datasets \citep{li2024zerog}, and UniGraph pre-trains a cascaded language model and GNN with masked autoencoding \citep{he2025unigraph}.
These methods rely on text to unify graphs from different domains.

\paragraph{Text-free multi-domain graph pre-training.}
For graphs without text, feature unification methods map the node attributes of different domains into a shared space.
FUG captures transferable distributions among attributes with a dimensional encoder \citep{zhao2024fug}, and STAGE encodes statistical dependencies between attributes \citep{pmlr-v267-shen25p}.
PatchNet decomposes node attributes into learnable graph patches \citep{sun2025handling}, and TIG mines feature-dimension relations shared across graphs \citep{zhao2026towards}.
LEDA aligns the node attributes of different domains with a shared latent distribution \citep{shan2026leda}, and ALL-IN builds covariance-based representations in a shared random space \citep{eliasof2026bridging}.
MUG applies universal pre-training to heterogeneous graphs through a dimension-aware encoder \citep{shan2026mug}.
Some methods avoid a shared attribute space: GraphControl pre-trains on graph structure and injects the node attributes of the target graph as conditional inputs \citep{zhu2024graphcontrol}, and GraphAny solves linear graph models in closed form on each new graph and combines them with attention \citep{zhao2025fully}.
Other methods add learnable alignment parameters to the model.
GCOPE connects the source graphs through domain-specific virtual nodes \citep{zhao2024all}, MDGPT learns domain tokens \citep{yu2024text}, and SAMGPT adds structure tokens \citep{yu2025samgpt}.
BRIDGE learns domain-invariant feature aligners \citep{yuan2025much}, and MF-GIA derives domain-specific transformations from gradient fingerprints \citep{zhuo2026modality}.
Another line seeks structural knowledge shared across domains.
GFT treats computation trees as a transferable vocabulary \citep{wang2024gft}, and RiemannGFM learns a vocabulary of trees and cycles in Riemannian space \citep{sun2025riemanngfm}.
MDGFM aligns topologies across domains \citep{wang2025multidomain}, and AnyGraph routes graphs to a mixture of graph experts \citep{xia2026anygraph}.
GraphGlue merges graphs from different domains into one smooth Riemannian manifold \citep{sun2026multi}, SPG learns a shared prototype geometry that guides propagation \citep{yang2026graph}, and ProGFM learns propagation patterns shared across edges and feature dimensions \citep{wang2026beyond}.
Beyond node classification, GFT and GIT pre-train a single model for node-, edge-, and graph-level tasks \citep{wang2024gft,pmlr-v267-wang25eq}, and holographic node representations serve tasks of any order after pre-training \citep{bevilacqua2025holographic}.
Most of these methods align domains through the input features or through parameters trained jointly with a message-passing encoder.

\paragraph{Relational knowledge distillation.}
Knowledge distillation trains a student to match the softened outputs of a teacher \citep{hinton2015distilling}.
Relational knowledge distillation transfers the relations among samples instead of the outputs of individual samples.
RKD matches pairwise distances and triplet angles \citep{park2019relational}.
LSP matches the local structure of each node between a teacher and a student graph convolutional network \citep{yang2020distilling}.
SEED and ReSSL match the similarity distribution of each sample over a set of other samples \citep{fang2021seed,zheng2021ressl}.
Deep mutual learning removes the fixed teacher and lets a cohort of networks learn from the predictions of each other \citep{zhang2018deep}.
FreeKD lets two shallow GNNs exchange knowledge in directions chosen by reinforcement learning \citep{feng2022freekd}.
On graphs, GLNN distills a trained GNN into a multilayer perceptron that needs no message passing at inference \citep{zhang2022graphless}.
NOSMOG further distills the similarities between node representations \citep{tian2023learning}, and TINED injects the parameters of the teacher and distills its Dirichlet energy \citep{zhou2025tined}.

\paragraph{Slowly updated targets.}
Averaging the iterates of stochastic optimization accelerates its convergence \citep{polyak1992acceleration}.
Momentum teachers keep an exponential moving average of the parameters of a student network and use it to produce targets, as in Mean Teacher \citep{tarvainen2017mean}, MoCo \citep{he2020momentum}, and BYOL \citep{grill2020bootstrap}.
BGRL applies this design to graph self-supervised learning \citep{thakoor2022largescale}, and ReSSL uses it to produce relation targets \citep{zheng2021ressl}.
Recent studies scale the momentum of the average with the batch size \citep{busbridge2023scale} and show that the averaged weights generalize better and are more robust to label noise \citep{morales-brotons2024exponential}.
Averaging the weights along the training trajectory improves generalization to unseen domains \citep{cha2021swad}.
Fast and slow weights also appear in optimization and continual learning.
Lookahead moves a set of slow weights toward the fast weights every $k$ steps \citep{zhang2019lookahead}, and CLS-ER keeps short-term and long-term semantic memories of a working model, following complementary learning systems theory \citep{mcclelland1995there,arani2022learning}.

\section{Experimental Details and Additional Results}\label{app:experiments}
This appendix describes the datasets, baselines, and evaluation protocols of Section~\ref{sec:experiments}. It then reports the few-shot results on every target graph, the computation of the loss curves in Figure~\ref{fig:source-count-six}(b), the effect of the pre-training budget, and the sensitivity of FLO to its hyperparameters.

\subsection{Datasets}\label{app:datasets}
The statistics of all datasets are summarized in Table~\ref{tab:datasets}. Cora, CiteSeer, and PubMed are citation networks, where nodes are papers, edges are citations, and node features are bag-of-words vectors. CS is a co-authorship network, where nodes are authors and edges indicate co-authored papers. Photo and Computers are co-purchase networks from Amazon, where nodes are products and edges indicate that two products are frequently bought together. In the leave-one-domain-out setting, each graph is held out in turn as the target graph and the other five graphs are used for pre-training. For link prediction we use the co-authorship network Physics, the Wikipedia hyperlink network WikiCS, and the citation network DBLP, and for graph classification we use COIL-RAG, ENZYMES, and PROTEINS from the TUDataset collection; for both tasks, the encoder is pre-trained on all six node classification datasets.

All datasets are loaded with PyTorch Geometric: Cora, CiteSeer, and PubMed through \texttt{Planetoid}; Photo and Computers through \texttt{Amazon}; CS and Physics through \texttt{Coauthor}; WikiCS through \texttt{WikiCS}; DBLP through \texttt{CitationFull}; and COIL-RAG, ENZYMES, and PROTEINS through \texttt{TUDataset}.

\begin{table}[htbp]
\centering
\footnotesize
\setlength{\tabcolsep}{4pt}
\caption{Statistics of the datasets. Edges are counted once per undirected pair; for graph classification, the numbers of nodes and edges are averaged over the graphs of each dataset.}
\label{tab:datasets}
\begin{tabular*}{\linewidth}{@{\extracolsep{\fill}}lrrrrrl}
\toprule
Dataset & Graphs & Nodes & Edges & Features & Classes & Domain \\
\midrule
\multicolumn{7}{c}{\textbf{Node classification}} \\
\midrule
Cora & 1 & 2,708 & 5,278 & 1,433 & 7 & Citation \\
CiteSeer & 1 & 3,327 & 4,552 & 3,703 & 6 & Citation \\
PubMed & 1 & 19,717 & 44,324 & 500 & 3 & Citation \\
CS & 1 & 18,333 & 81,894 & 6,805 & 15 & Co-authorship \\
Photo & 1 & 7,650 & 119,081 & 745 & 8 & Co-purchase \\
Computers & 1 & 13,752 & 245,861 & 767 & 10 & Co-purchase \\
\midrule
\multicolumn{7}{c}{\textbf{Link prediction}} \\
\midrule
Physics & 1 & 34,493 & 247,962 & 8,415 & -- & Co-authorship \\
WikiCS & 1 & 11,701 & 215,603 & 300 & -- & Web \\
DBLP & 1 & 17,716 & 52,867 & 1,639 & -- & Citation \\
\midrule
\multicolumn{7}{c}{\textbf{Graph classification}} \\
\midrule
COIL-RAG & 3,900 & 3.01 & 3.02 & 64 & 100 & Computer vision \\
ENZYMES & 600 & 32.63 & 62.14 & 21 & 6 & Bioinformatics \\
PROTEINS & 1,113 & 39.06 & 72.82 & 32 & 2 & Bioinformatics \\
\bottomrule
\end{tabular*}
\end{table}

\subsection{Baselines}\label{app:baselines}
\textbf{Graph self-supervised learning methods.} DGI \citep{veličković2018deep} maximizes the mutual information between node representations and a summary vector of the graph, and takes the representations of a graph with shuffled node features as negative samples. GRACE \citep{zhu2020deep} builds two views of a graph by removing edges and masking features, and contrasts the representations of each node across the two views. BGRL \citep{thakoor2022largescale} predicts the representations of one augmented view from another view and takes its targets from an encoder updated by an exponential moving average, so it needs no negative samples. Since these methods require a common input dimension, we reduce the features of every graph to a common dimension by singular value decomposition and pre-train each method on the source graphs for 1{,}000 epochs.

\textbf{Feature unification methods.} FUG \citep{zhao2024fug} designs a dimension encoder with contrastive constraints that emulates the basis transformation of principal component analysis. The encoder maps graphs with different feature dimensions into a shared space without reshaping their features, and a global uniformity constraint replaces negative sampling in graph contrastive pre-training. TIG \citep{zhao2026towards} mines transfer-invariant metadata from node features, namely feature-dimension relations shared across graphs. It turns features of different dimensions into a unified structural representation and learns graph knowledge from it by self-supervised learning.

\textbf{Multi-domain graph pre-training methods.} GFT \citep{wang2024gft} treats computation trees as transferable patterns. It learns a tree vocabulary by reconstructing computation trees during pre-training and casts downstream tasks as tree classification. MDGPT \citep{yu2024text} aligns the features of source domains with learnable domain tokens and adapts to the target domain with a unifying prompt and a mixing prompt. SAMGPT \citep{yu2025samgpt} further introduces structure tokens that modulate message passing in each domain to align structural patterns, and adapts with holistic and specific prompts. MDGFM \citep{wang2025multidomain} aligns topologies across domains. It refines each graph with a structure learned from node features and propagated features, and pre-trains by contrasting the original and refined graphs. BRIDGE \citep{yuan2025much} aligns multi-domain features with domain-invariant aligners during pre-training. It transfers source knowledge to the target through a lightweight mixture of experts and bounds the generalization error of this transfer.

All baselines are run with their official code (Table~\ref{tab:baseline-code}) and the settings recommended in their papers. Following their original settings, MDGPT, SAMGPT, MDGFM, and BRIDGE reduce the feature dimension of each graph to 50 with principal component analysis, and GFT obtains 768-dimensional features with singular value decomposition. Since TIG builds a node--feature graph whose edge weights are the feature values, the features of the cross-task datasets that take negative values or values above one (WikiCS, ENZYMES, and PROTEINS) are min--max scaled to $[0,1]$ in each dimension before its preprocessing. For all methods, the representations produced by the pre-trained encoder are used for downstream evaluation.

\begin{table}[htbp]
\centering
\footnotesize
\caption{Code of the baselines.}
\label{tab:baseline-code}
\begin{tabular*}{\linewidth}{@{\extracolsep{\fill}}llll@{}}
\toprule
Method & Code & Method & Code\\
\midrule
DGI & \href{https://github.com/PetarV-/DGI}{github.com/PetarV-/DGI} & GFT & \href{https://github.com/Zehong-Wang/GFT}{github.com/Zehong-Wang/GFT}\\
GRACE & \href{https://github.com/CRIPAC-DIG/GRACE}{github.com/CRIPAC-DIG/GRACE} & MDGPT & \href{https://anonymous.4open.science/r/MDGPT}{anonymous.4open.science/r/MDGPT}\\
BGRL & \href{https://github.com/nerdslab/bgrl}{github.com/nerdslab/bgrl} & SAMGPT & \href{https://github.com/blue-soda/SAMGPT}{github.com/blue-soda/SAMGPT}\\
FUG & \href{https://github.com/hedongxiao-tju/FUG}{github.com/hedongxiao-tju/FUG} & MDGFM & \href{https://github.com/wbkzwqtzw/MDGFM}{github.com/wbkzwqtzw/MDGFM}\\
TIG & \href{https://github.com/hedongxiao-tju/TIG}{github.com/hedongxiao-tju/TIG} & BRIDGE & \href{https://github.com/RingBDStack/BRIDGE}{github.com/RingBDStack/BRIDGE}\\
\bottomrule
\end{tabular*}
\end{table}

\subsection{Evaluation Protocols}\label{app:probe}
Since each prior work covers only part of our evaluation scenarios, we follow the corresponding prior protocol in each scenario, and all encoders are frozen during downstream evaluation. For node classification, following FUG \citep{zhao2024fug}, we train a logistic regression classifier on the $\ell_2$-normalized node representations. The nodes of the target graph are split into 20\% and 80\%; the regularization strength of the classifier is selected on five random 50\%/50\% training/validation splits of the 20\%, which gives a 10\%/10\%/80\% split, and the accuracy on the remaining 80\% is averaged over three random splits. For few-shot node classification, following TIG \citep{zhao2026towards}, we use the same 500 sampled $k$-shot tasks of each target graph for all methods and train a logistic regression classifier on the $k$ labeled nodes of each class. For link prediction, the representations of each method are standardized column-wise, and an edge classifier with the same architecture is trained for all methods with early stopping on the validation edges; on Physics, WikiCS, and DBLP we use five random 85\%/5\%/10\% splits of the edges; message passing uses only the training edges. For graph classification, the node representations of a graph are pooled by concatenating mean and max pooling, and the same two-layer classifier is trained on 20\% of the graphs and tested on the remaining 80\%, over 50 stratified splits. The pre-training hyperparameters of FLO are shared across all target graphs (Table~\ref{tab:hyperparameters}), and FLO is evaluated at the final epoch of pre-training. The random seed is fixed to 0 for reproducibility.

\textbf{Environment.} All experiments run on one Linux server, and all methods are timed on the same type of GPU (Table~\ref{tab:environment}).

\begin{table}[htbp]
\centering
\footnotesize
\caption{Experimental environment.}
\label{tab:environment}
\begin{tabular*}{\linewidth}{@{\extracolsep{\fill}}ll@{}}
\toprule
Component & Specification\\
\midrule
Operating system & Ubuntu 22.04.5 LTS, Linux kernel 6.8.0\\
CPU & Intel Xeon Silver 4410Y, 48 threads\\
Memory & 128 GB\\
GPU & NVIDIA GeForce RTX 5090, 32 GB\\
Software & Python 3.10.12, PyTorch 2.11.0 with CUDA 12.8, PyTorch Geometric 2.8.0\\
 & NumPy 2.2.6, scikit-learn 1.7.2\\
\bottomrule
\end{tabular*}
\end{table}

\subsection{Few-Shot Node Classification}\label{app:fewshot}
This subsection details the few-shot evaluation of Section~\ref{sec:q1} and reports the results on every target graph. The pre-trained encoder of FLO is the same as in node classification. In the few-shot setting, the number of singular components kept for the target graph in Eq.~\eqref{eq:frontend} is set for each target graph (Table~\ref{tab:hyperparameters}), and the remaining dimensions are padded with zeros. Following GGD \citep{zheng2022rethinking}, we further enhance the similarity relations between connected nodes by propagating the frozen representations over the target graph with $\widehat{\mathbf A}$ for a number of steps set for each target graph (Table~\ref{tab:hyperparameters}).
 Tables~\ref{tab:fewshot-1}, \ref{tab:fewshot-3}, and~\ref{tab:fewshot-5} report the results of all methods on each target graph; the results on Cora, CiteSeer, and CS are plotted in Figure~\ref{fig:fewshot}. FLO achieves the best average accuracy in all three settings, and it ranks first on four, three, and three of the six target graphs with one, three, and five labeled nodes per class, respectively.

\begin{table}[tb]
\centering
\footnotesize
\setlength{\tabcolsep}{1.5pt}
\caption{Accuracy (\%) of 1-shot node classification with standard deviations over the 500 tasks shared by all methods. The best and second-best results are in bold and underlined.}\label{tab:fewshot-1}
\begin{tabular*}{\linewidth}{@{\extracolsep{\fill}}lccccccc}
\toprule
Method & Cora & CiteSeer & PubMed & CS & Photo & Computers & Avg. \\
\midrule
DGI & 33.09$_{\pm5.51}$ & 33.79$_{\pm5.60}$ & 47.10$_{\pm6.45}$ & 71.82$_{\pm6.36}$ & 56.42$_{\pm8.15}$ & 42.48$_{\pm9.90}$ & 47.45 \\
GRACE & 28.13$_{\pm4.46}$ & 24.20$_{\pm3.88}$ & 39.81$_{\pm5.61}$ & 47.23$_{\pm5.77}$ & 42.35$_{\pm7.61}$ & 38.77$_{\pm7.31}$ & 36.75 \\
BGRL & 34.19$_{\pm4.42}$ & 25.04$_{\pm3.75}$ & 43.37$_{\pm5.66}$ & 57.58$_{\pm5.84}$ & 60.32$_{\pm10.14}$ & 50.95$_{\pm11.17}$ & 45.24 \\
FUG & 46.40$_{\pm8.37}$ & 36.07$_{\pm6.81}$ & 52.23$_{\pm9.47}$ & \textbf{78.02}$_{\pm5.88}$ & \underline{63.62}$_{\pm9.77}$ & \underline{51.27}$_{\pm12.82}$ & 54.60 \\
TIG & \underline{48.88}$_{\pm8.80}$ & 44.71$_{\pm9.02}$ & 50.94$_{\pm10.00}$ & 75.28$_{\pm6.22}$ & 63.52$_{\pm10.47}$ & 49.47$_{\pm13.08}$ & \underline{55.47} \\
GFT & 27.25$_{\pm4.78}$ & 26.78$_{\pm4.31}$ & 45.36$_{\pm6.00}$ & 63.49$_{\pm6.42}$ & 20.01$_{\pm4.37}$ & 14.62$_{\pm5.12}$ & 32.92 \\
MDGPT & 42.49$_{\pm7.94}$ & \underline{45.13}$_{\pm8.81}$ & \textbf{55.34}$_{\pm8.32}$ & 72.98$_{\pm6.61}$ & 55.92$_{\pm8.31}$ & 41.05$_{\pm8.44}$ & 52.15 \\
SAMGPT & 44.81$_{\pm8.42}$ & 41.52$_{\pm8.15}$ & 45.42$_{\pm8.66}$ & 76.48$_{\pm6.17}$ & 59.17$_{\pm8.63}$ & 46.73$_{\pm9.64}$ & 52.35 \\
MDGFM & 42.43$_{\pm6.05}$ & 38.84$_{\pm7.54}$ & 48.21$_{\pm7.93}$ & 74.11$_{\pm6.55}$ & 63.43$_{\pm8.91}$ & 48.86$_{\pm9.56}$ & 52.65 \\
BRIDGE & 42.77$_{\pm6.37}$ & 40.70$_{\pm7.61}$ & 53.57$_{\pm7.78}$ & 75.14$_{\pm6.36}$ & 53.23$_{\pm7.21}$ & 38.89$_{\pm7.11}$ & 50.72 \\
\midrule
\textbf{FLO} & \textbf{55.95}$_{\pm8.41}$ & \textbf{45.93}$_{\pm8.99}$ & \underline{55.26}$_{\pm9.59}$ & \underline{77.20}$_{\pm6.24}$ & \textbf{67.76}$_{\pm9.55}$ & \textbf{52.99}$_{\pm10.76}$ & \textbf{59.18} \\
\bottomrule
\end{tabular*}
\end{table}

\begin{table}[tb]
\centering
\footnotesize
\setlength{\tabcolsep}{1.5pt}
\caption{Accuracy (\%) of 3-shot node classification with standard deviations over the 500 tasks shared by all methods. The best and second-best results are in bold and underlined.}\label{tab:fewshot-3}
\begin{tabular*}{\linewidth}{@{\extracolsep{\fill}}lccccccc}
\toprule
Method & Cora & CiteSeer & PubMed & CS & Photo & Computers & Avg. \\
\midrule
DGI & 45.56$_{\pm5.08}$ & 44.96$_{\pm4.42}$ & 55.58$_{\pm5.12}$ & 85.26$_{\pm2.08}$ & 70.17$_{\pm5.23}$ & 56.98$_{\pm7.27}$ & 59.75 \\
GRACE & 38.59$_{\pm4.00}$ & 30.63$_{\pm3.88}$ & 45.68$_{\pm4.85}$ & 65.49$_{\pm3.51}$ & 57.22$_{\pm6.52}$ & 50.64$_{\pm6.08}$ & 48.04 \\
BGRL & 46.74$_{\pm4.09}$ & 31.66$_{\pm3.58}$ & 51.06$_{\pm5.51}$ & 75.12$_{\pm3.29}$ & 72.63$_{\pm7.23}$ & 62.05$_{\pm8.43}$ & 56.54 \\
FUG & \underline{61.63}$_{\pm5.42}$ & 47.62$_{\pm4.94}$ & 60.82$_{\pm6.32}$ & \underline{86.55}$_{\pm1.70}$ & 65.91$_{\pm8.90}$ & 58.85$_{\pm10.17}$ & 63.56 \\
TIG & 58.53$_{\pm6.17}$ & 56.39$_{\pm5.03}$ & 56.48$_{\pm7.81}$ & 83.63$_{\pm3.04}$ & 68.24$_{\pm10.30}$ & 56.31$_{\pm11.03}$ & 63.26 \\
GFT & 37.84$_{\pm4.25}$ & 35.49$_{\pm3.97}$ & 51.90$_{\pm5.65}$ & 78.95$_{\pm2.74}$ & 22.25$_{\pm4.74}$ & 17.49$_{\pm6.30}$ & 40.65 \\
MDGPT & 54.99$_{\pm6.30}$ & \underline{57.34}$_{\pm4.97}$ & \textbf{63.88}$_{\pm6.02}$ & 82.70$_{\pm2.45}$ & 65.47$_{\pm6.54}$ & 53.34$_{\pm7.18}$ & 62.95 \\
SAMGPT & 58.66$_{\pm6.25}$ & 53.81$_{\pm5.20}$ & 52.05$_{\pm7.82}$ & 85.27$_{\pm2.00}$ & 68.73$_{\pm6.40}$ & 60.28$_{\pm7.14}$ & 63.13 \\
MDGFM & 53.68$_{\pm4.89}$ & 50.41$_{\pm4.89}$ & 57.41$_{\pm6.50}$ & 84.93$_{\pm2.17}$ & \textbf{76.69}$_{\pm5.22}$ & \textbf{62.95}$_{\pm6.47}$ & \underline{64.34} \\
BRIDGE & 56.36$_{\pm5.15}$ & 53.38$_{\pm4.94}$ & 62.98$_{\pm6.00}$ & 85.53$_{\pm1.96}$ & 65.52$_{\pm5.55}$ & 51.56$_{\pm6.21}$ & 62.55 \\
\midrule
\textbf{FLO} & \textbf{65.78}$_{\pm5.37}$ & \textbf{57.82}$_{\pm5.37}$ & \underline{63.73}$_{\pm6.42}$ & \textbf{87.86}$_{\pm1.82}$ & \underline{75.74}$_{\pm6.26}$ & \underline{62.90}$_{\pm7.00}$ & \textbf{68.97} \\
\bottomrule
\end{tabular*}
\end{table}

\begin{table}[tb]
\centering
\footnotesize
\setlength{\tabcolsep}{1.5pt}
\caption{Accuracy (\%) of 5-shot node classification with standard deviations over the 500 tasks shared by all methods. The best and second-best results are in bold and underlined.}\label{tab:fewshot-5}
\begin{tabular*}{\linewidth}{@{\extracolsep{\fill}}lccccccc}
\toprule
Method & Cora & CiteSeer & PubMed & CS & Photo & Computers & Avg. \\
\midrule
DGI & 53.07$_{\pm3.79}$ & 50.55$_{\pm3.54}$ & 60.34$_{\pm4.47}$ & 87.77$_{\pm1.20}$ & 74.10$_{\pm4.33}$ & 63.16$_{\pm4.49}$ & 64.83 \\
GRACE & 44.50$_{\pm3.25}$ & 34.72$_{\pm3.42}$ & 49.95$_{\pm4.23}$ & 71.96$_{\pm2.75}$ & 64.24$_{\pm5.42}$ & 56.71$_{\pm5.15}$ & 53.68 \\
BGRL & 53.28$_{\pm3.37}$ & 35.74$_{\pm3.11}$ & 56.07$_{\pm4.68}$ & 81.12$_{\pm2.22}$ & \underline{77.83}$_{\pm5.18}$ & \underline{67.13}$_{\pm6.08}$ & 61.86 \\
FUG & \underline{67.31}$_{\pm3.23}$ & 53.28$_{\pm3.86}$ & 64.51$_{\pm4.64}$ & \underline{88.13}$_{\pm1.13}$ & 66.29$_{\pm7.50}$ & 62.83$_{\pm8.18}$ & 67.06 \\
TIG & 63.70$_{\pm4.59}$ & 60.66$_{\pm3.45}$ & 59.64$_{\pm6.58}$ & 86.62$_{\pm1.98}$ & 69.20$_{\pm8.48}$ & 60.80$_{\pm8.49}$ & 66.77 \\
GFT & 44.13$_{\pm3.73}$ & 40.85$_{\pm3.38}$ & 56.09$_{\pm4.75}$ & 82.69$_{\pm1.62}$ & 24.92$_{\pm4.83}$ & 19.76$_{\pm6.37}$ & 44.74 \\
MDGPT & 60.86$_{\pm3.97}$ & \underline{61.87}$_{\pm3.28}$ & \underline{67.59}$_{\pm4.71}$ & 85.35$_{\pm1.51}$ & 70.45$_{\pm4.98}$ & 60.12$_{\pm5.19}$ & 67.71 \\
SAMGPT & 64.52$_{\pm3.90}$ & 59.02$_{\pm3.43}$ & 55.45$_{\pm7.08}$ & 87.01$_{\pm1.24}$ & 72.49$_{\pm5.25}$ & 66.07$_{\pm4.45}$ & 67.43 \\
MDGFM & 58.91$_{\pm3.70}$ & 55.27$_{\pm3.34}$ & 62.87$_{\pm5.39}$ & 87.28$_{\pm1.30}$ & \textbf{80.65}$_{\pm3.25}$ & \textbf{68.51}$_{\pm3.91}$ & \underline{68.92} \\
BRIDGE & 62.45$_{\pm3.43}$ & 58.70$_{\pm3.29}$ & \textbf{67.64}$_{\pm4.69}$ & 87.68$_{\pm1.28}$ & 71.09$_{\pm4.29}$ & 58.77$_{\pm4.91}$ & 67.72 \\
\midrule
\textbf{FLO} & \textbf{68.38}$_{\pm4.03}$ & \textbf{62.03}$_{\pm3.52}$ & 67.34$_{\pm4.87}$ & \textbf{89.43}$_{\pm1.21}$ & 77.43$_{\pm5.42}$ & 66.44$_{\pm4.61}$ & \textbf{71.84} \\
\bottomrule
\end{tabular*}
\end{table}

\subsection{Pre-training Loss Curves}\label{app:loss-curves}
This subsection describes how the loss curves in Figure~\ref{fig:source-count-six}(b) are computed. Figure~\ref{fig:source-count-six}(b) uses the first 40 epochs of the same leave-one-domain-out runs as the main experiments. For FLO, each point is the per-epoch mean over the five source graphs of the graph encoder's objective, $\mathcal L_{\mathrm{rec}}^G+\lambda r_e\mathcal L_{\mathrm{kd}}^G$; BRIDGE and SAMGPT use their logged pre-training objectives. For each method and target graph separately, we plot $(\log L_e-\min_{t\leq40}\log L_t)/(\max_{t\leq40}\log L_t-\min_{t\leq40}\log L_t)$. Thus the figure compares changes over time rather than the magnitudes of different objectives. The largest one-epoch increases are computed from the unnormalized losses as $\max_{2\leq e\leq40} L_e/L_{e-1}-1$.

\subsection{Effect of Pre-training Epochs}\label{app:epochs}
FLO is evaluated at the final epoch of a fixed budget of 3{,}000 epochs, without selecting checkpoints on the target graph (Section~\ref{sec:setup}). This subsection examines whether the results depend on this budget. Figure~\ref{fig:epochs} shows the accuracy on each target graph at pre-training budgets from 300 to 3{,}000 epochs, with the frozen graph encoder and the evaluation protocol of the main experiments. From 300 to 1{,}000 epochs, the accuracy rises by 1.00 points on CiteSeer and 0.35 points on Cora, the two graphs on which propagation changes the relations most (Appendix~\ref{app:relation-change}), and changes by at most 0.16 points on the other four graphs. Between 1{,}000 and 3{,}000 epochs, the accuracy changes by at most 0.25 points on Cora, PubMed, CS, Photo, and Computers, and by 0.53 points on CiteSeer, where it still increases slowly. FLO therefore needs no target-specific early stopping, and all reported results use the fixed budget of 3{,}000 epochs.

\begin{figure}[htbp]
\centering
\includegraphics[width=\linewidth]{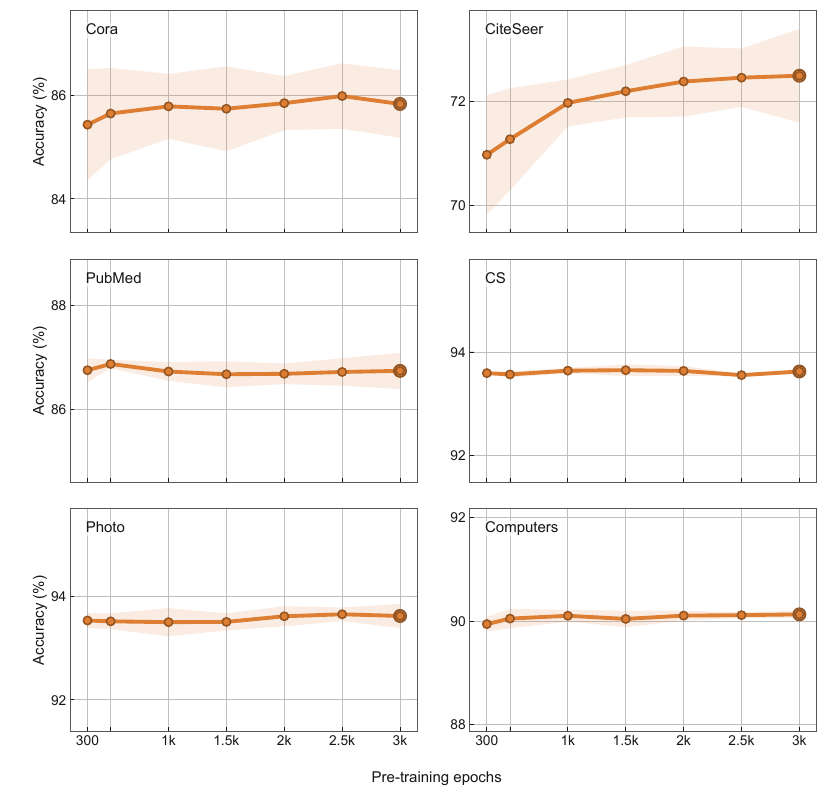}
\caption{Accuracy (\%) of node classification on each target graph at pre-training budgets of 300, 500, 1{,}000, 1{,}500, 2{,}000, 2{,}500, and 3{,}000 epochs. Shaded bands show the standard deviation over the three random splits of node classification, all panels share the same vertical span, and the circled point is the budget used in all other experiments.}
\label{fig:epochs}
\end{figure}

\subsection{Hyperparameter Sensitivity}
We vary the temperature $\tau$, the relation weight $\lambda$, and the momentum $\mu$, each over five values around its default, and keep all other hyperparameters fixed. We evaluate every setting for 2,000 pre-training epochs under each of the six leave-one-domain-out configurations. As shown in Figure~\ref{fig:sensitivity}, the accuracy varies by at most 1.48 points on CiteSeer and by at most 0.86 points on the other graphs, and the default values are within 0.45 points of the best setting on every graph. Removing the warm-up of the relation weight lowers the accuracy by 0.36 points on Cora and 0.04 points on Photo, and extending it to 200 epochs changes the accuracy by at most 0.08 points.

We also vary the learning rate over $\{10^{-4},5\cdot10^{-4},10^{-3},5\cdot10^{-3},10^{-2}\}$ and the hidden width of both branches over $\{128,256,512,1{,}024,2{,}048\}$ with the same budget (Figure~\ref{fig:sensitivity}). From $5\cdot10^{-4}$ to $10^{-2}$, the learning rate changes the accuracy by at most 0.56 points on every target graph. The smallest learning rate, $10^{-4}$, lowers the accuracy by 0.83 points on Cora and 1.84 points on CiteSeer, and changes it by at most 0.49 points on the other graphs. Hidden widths from 128 to 1{,}024 change the accuracy by at most 0.87 points, and the default width of 512 gives the best accuracy on five of the six target graphs. A width of 2{,}048 lowers the accuracy on every target graph, by 3.06 points on CiteSeer and by at most 1.51 points on the others.

\begin{figure}[htbp]
\centering
\includegraphics[width=\linewidth]{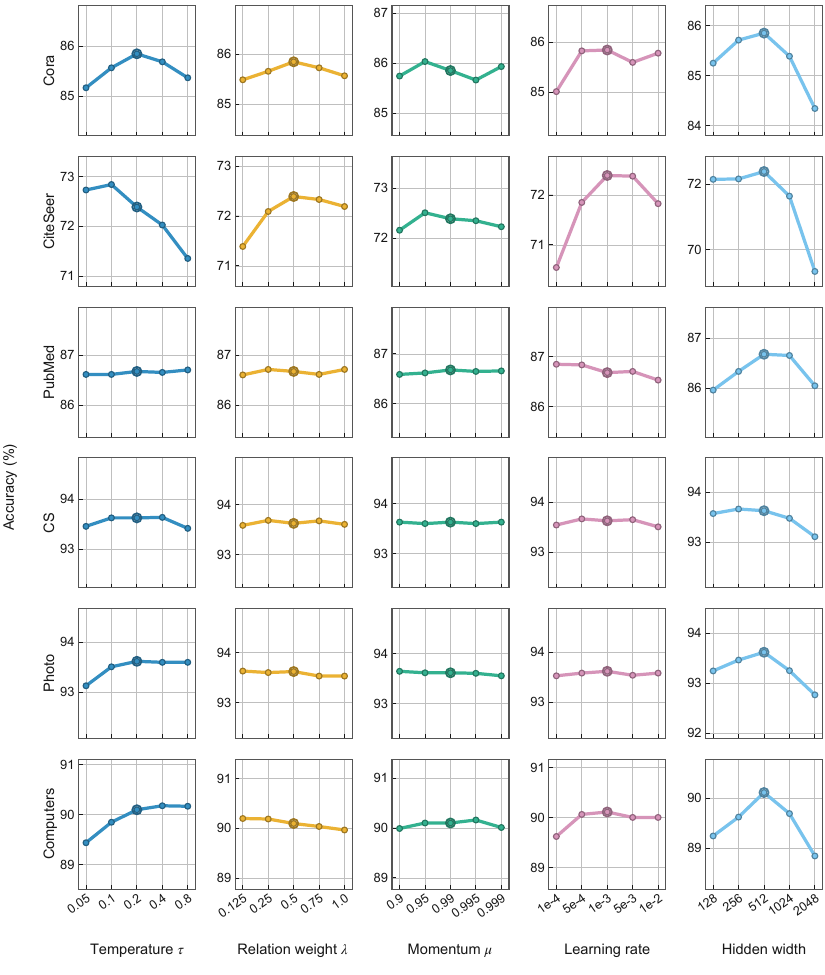}
\caption{Accuracy (\%) of node classification after 2,000 pre-training epochs when the temperature $\tau$, the relation weight $\lambda$, the momentum $\mu$, the learning rate, or the hidden width of both branches is varied. Each row is a target graph, and the circled point is the default value.}
\label{fig:sensitivity}
\end{figure}

\FloatBarrier

\end{document}